\let\counterwithin\relax
\newtheorem{lemma}{Lemma}
\newtheorem{proposition}{Proposition}
\newtheorem{thm}{Theorem}
\newtheorem{definition}{Definition}
\newtheorem{corollary}{Corollary}
\newtheorem{assumption}{Assumption}
\theoremstyle{definition}
\def\vec{\mathrm{vec}} 
\newcommand{\argmin}{\ensuremath{\operatornamewithlimits{arg\,min}}}
\newcommand{\rank}{\mathrm{rank}}
\newtheorem{example}{Example}
\newcommand{\tp}{\intercal}
\newcommand{\bigO}{\ensuremath{\mathop{}\mathopen{}\mathcal{O}\mathopen{}}}
\newcommand{\bigOp}{\bigO_\mathrm{p}}
\newcommand{\R}{\mathbb{R}}
\newcommand{\bbeta}{\bm{\beta}}
\newcommand{\btheta}{\bm{\theta}}
\newcommand{\supp}[1]{#1}
\title{CP degeneracy in Tensor Regression}
\author[1]{Ya Zhou}
\author[2]{Raymond K. W. Wong}
\author[1]{Kejun He}
\affil[1]{Institute of Statistics and Big Data, Renmin University of China}
\affil[2]{Department of Statistics, Texas A\&M University,
College Station,
USA}
\date{}
\begin{document}
\maketitle

\begin{abstract}
  Tensor linear regression is an important and useful tool for analyzing tensor data. To deal with high dimensionality, CANDECOMP/PARAFAC (CP) low-rank constraints are often imposed on the coefficient tensor parameter in the (penalized) $M$-estimation. However, we show that the corresponding optimization may not be attainable, and when this happens, the estimator is not well-defined. This is closely related to a phenomenon, called CP degeneracy, in low-rank tensor approximation problems. In this article, we provide useful results of CP degeneracy in tensor regression problems. In addition, we provide a general penalized strategy as a solution to overcome CP degeneracy. The asymptotic properties of the resulting estimation are also studied. Numerical experiments are conducted to illustrate our findings.
\end{abstract}
Keywords: High-dimensional regression; Low-rank modeling; Penalized regression

\section{Introduction}

Tensor linear regression \citep{raskutti2019convex, guo2011tensor, Zhou-Li-Zhu13, hoff2015multilinear, sun2017store, lock2018tensor,miranda2018tprm, chen2019non} has recently received an increasing amount of attention among the statistical researchers. In most existing work, low-rank constraints are imposed to overcome the curse of dimensionality \citep{chen2019non, guo2011tensor, Zhou-Li-Zhu13, lock2018tensor, li2018tucker, raskutti2019convex, zhang2020islet}. One popular low-rank modeling is based on CANDECOMP/PARAFAC (CP) decomposition \citep{harshman1970foundations}. However, the corresponding low-rank space may not be compact \citep{de2008tensor}. This raises questions about the well-posedness of the corresponding $M$-estimators, as the solution of their optimizations may not be attainable.
Similar issues, known as CP degeneracy, have been discussed in the context of low-rank \textit{approximation} problems \citep{de2008tensor, krijnen2008non}, which can be treated as a special case of tensor linear regression (see Example \ref{exam:CPD} in Section \ref{ssec:existDegn}). 
However, such fundamental and important issues are seldomly discussed in the statistics literature.
In this article, we aim to provide corresponding understanding and discussion in the context of tensor linear regression, from a statistical perspective.
In particular, we will discuss the existence and characteristics of the CP degeneracy in tensor linear regression. We will also provide a solution to CP degeneracy and investigate the properties of the resulting estimation procedure.

The contribution of this article is threefold. 
First, we construct a set of examples to illustrate the possibility of CP degeneracy in tensor regression settings.
Second, we theoretically derive some important characteristics of CP degeneracy. A crucial characteristic of CP degeneracy is that the magnitude of the CP parameter will diverge, if an iterative algorithm is adopted to obtain an $M$-estimator of the CP parameter.
Third, we propose a general penalized method with the theoretical guarantee to surmount the barrier of degeneracy.
In particular, we provide an asymptotic analysis of the penalized estimation in the high-dimensional regime. To the best of our knowledge, this is the first analysis that does not require the assumption of the well-defined low-rank approximation. That is, in our analysis, the underlying low-rank approximation of the true tensor coefficient \textit{does not} have to exist.

The rest of this article is organized as follows. Section \ref{sec:pre} provides the background of CP decomposition and degeneracy, as well as tensor linear regression. Section \ref{sec:theory} discusses the existence of CP degeneracy, its characteristics and a strategy to overcome it. Section \ref{sec:simulation} provides corresponding numerical experiments. \supp{Technical details are deferred to the Section \ref{supp}}.

\section{Preliminary}\label{sec:pre}
\subsection{Tensor basics and CP degeneracy}
In this article, we focus on tensors that are real-valued multidimensional array.
For a tensor $\mathbf A = (A_{i_1,\ldots,i_D}) \in \mathbb{R}^{p_1 \times \cdots \times p_D}$ with $p_d \in \mathbb{Z}^+$ for $d=1,\ldots,D$, there are $D$ modes where $D$ is often referred to as the order of the tensor.
Vectors and matrices can be regarded as first-order and second-order tensors respectively.
Often, tensors of order 3 or above are referred to as higher-order tensors.
It is now well-known that higher-order tensors behave very differently from its lower-order counterparts, e.g., matrices.
Indeed, many tensor problems are NP-hard \citep{Hillar-Lim13}.
Also, many matrix-related concepts cannot be directly generalized to higher-order tensors.
The CP degeneracy problem (with details given below) --- the major subject of this article --- is an example.
To avoid clutter, we will simply refer to a higher-order tensor as a tensor, and use vector and matrix specifically for first-order and second-order tensor.

One of the most important matrix tools is singular value decomposition (SVD). 
As for tensors, there are two commonly used tensor decompositions, CANDECOMP/PARAFAC (CP) decomposition \citep{harshman1970foundations} and Tucker decomposition \citep{tucker1966some}, both of which can be regarded as higher-order generalizations of  SVD. 
The major interest of this article centers around the CP decomposition.
More specifically, the CP decomposition of a tensor $\mathbf A \in \mathbb{R}^{p_1 \times \cdots \times p_D}$ is written as 
\begin{equation}\label{eqn:CPdecom}
	\mathbf{A}  =  \sum_{r=1}^{R} \bbeta_{1,r} \circ \cdots \circ \bbeta_{D, r},
\end{equation}
where $\bbeta_{d,r} \in \mathbb{R}^{p_d}$, $d=1,\ldots,D$, and $\circ$ represents the outer product.
We refer to the entries of $\bbeta_{d,r}$'s as CP parameters.
The rank of a tensor $\mathbf A$, denoted by $\rank(\mathbf A)$, is the smallest $R$ such that the CP decomposition \eqref{eqn:CPdecom} holds.
In other words, it is the minimal number of rank-1 tensors (outer products of $D$ vectors) that can represent the tensor.
Despite its high similarity to the definition of matrix rank, tensor rank possesses surprisingly different properties.
First, the rank of tensor may depend on whether the entries are real-valued or complex-valued (i.e., the fields).
To date, this seems to be relatively less interesting to the statistical community, as most of the statistical work related to tensors focus on real-valued tensors.
Second, the rank determination is NP-hard \citep{johan1990tensor}.
Third, the space of low-rank tensors may not be closed. More precisely, it has been shown by \citet{de2008tensor} that the space
\[
	\{\mathbf A: \rank(\mathbf A) \le R, \mathbf A \in \mathbb{R}^{p_1 \times \cdots \times p_D} \}
\]
is not closed regardless of the choice of norm, for  $R =2, \ldots, \min\{p_1,\ldots,p_D\}$ and $D\ge 3$.
As a result, the best low-rank approximation for a higher order tensor may not exist.
More specifically, the best rank-$R$ approximation is defined as a solution to the following problem
\begin{equation}\label{eqn:lowrank:appro}
	\argmin_{\rank(\mathbf A) \le R} \Vert \mathbf G - \mathbf A \Vert_{F}.
\end{equation}
Here $\Vert \cdot \Vert_{F}$ denotes the Frobenius norm, also called Hilbert-Schmidt norm, which is defined as 
$\Vert \mathbf A \Vert_{F}  
= \sqrt{\sum_{i_1,\ldots,i_D} A_{i_1,\ldots,i_D}^2}$.
The non-closedness of the feasible set in \eqref{eqn:lowrank:appro} has a significant impact on the existence of a solution.
When the solution of \eqref{eqn:lowrank:appro} does not exist, it is generally referred to as a CP degeneracy problem \citep{kolda2009tensor}.
Note that the degeneracy problem never happens for matrices (i.e., $D=2$), and a best rank-$R$ approximation for a matrix can be obtained efficiently by the $R$ leading factors of the SVD \citep{eckart1936approximation}.
Indeed, the possible non-existence of a best low-rank approximation has a significant influence on the well-posedness of low-rank tensor modeling in statistical problems.
The goal of this article is to investigate this in the low-rank tensor regression problems, and provide solutions to avoid the CP degeneracy in the regression settings.

To better understand this non-existence of best rank-$R$ approximation for a higher-order tensor, \textit{border rank} was introduced in \cite{1980Approximate}. 
The border rank of a tenor $\mathbf A$ is defined as
\begin{equation*}
	\begin{aligned}
		\rank_B(\mathbf A) := \min\{R: \text{for any } \epsilon > 0, &\text{ there exists a tensor} \  \mathbf C   \ \\& \quad \quad  \text{such that}   \Vert \mathbf C \Vert_{F} < \epsilon    \ \text{and} \ \rank(\mathbf A + \mathbf C)= R \}.
	\end{aligned}
\end{equation*}
When $\rank_{\rm B}(\mathbf A) < \rank(\mathbf A)$, $\mathbf A$ does not have a best rank-$R$ approximation for $\rank_{\rm B}(\mathbf A) \leq R < \rank(\mathbf A)$.
Many examples of tensors with border rank smaller than the tensor rank
can be found in the literature \citep[e.g.,][]{1980Approximate, paatero2000construction, stegeman2006degeneracy, stegeman2007degeneracy, de2008tensor}.
For example, let
\[
	\mathbf G  = \mathbf v_1 \circ \mathbf w_2 \circ \mathbf w_3 + \mathbf w_1 \circ \mathbf v_2 \circ \mathbf w_3+ \mathbf w_1 \circ \mathbf w_2 \circ \mathbf v_3,
\]
where the pair $\mathbf w_d,\mathbf v_d \in \R^{p_d}$ are linearly independent for each $d=1,2,3$.
As shown in \citet{de2008tensor}, $\rank_B(\mathbf G)=2$ while $\rank(\mathbf G)=3$.
One can construct the rank-2 tensor sequence
$\{\mathbf G_{\gamma}, \gamma = 1,2,\ldots \}$:
\begin{equation}\label{eqn:def:A_alpha}
	\mathbf G_{\gamma} = \gamma \bigg(\mathbf w_1 + \frac{1}{\gamma} \mathbf v_1 \bigg) \circ \bigg(\mathbf w_2 + \frac{1}{\gamma} \mathbf v_2 \bigg) \circ \bigg(\mathbf w_3 + \frac{1}{\gamma} \mathbf v_3 \bigg ) - \gamma \mathbf w_1  \circ \mathbf w_2  \circ \mathbf w_3.
\end{equation}
One can see that $\Vert \mathbf G_{\gamma} - \mathbf G\Vert_{F} \to 0 $ as $\gamma \to \infty$.
In other words, this sequence of rank-2 tensors converges to a rank-3 tensor.
Let us suppose (on the contrary that) there exists a solution $\mathbf G_0$ to the best rank-2 approximation problem \eqref{eqn:lowrank:appro}. 
It follows from the assumption that $\rank(\mathbf G_0) \le 2$ and $\Vert \mathbf G_0 - \mathbf G\Vert_{F} >0$. However, by \eqref{eqn:def:A_alpha}, we can always find a $\gamma$ such that $\Vert \mathbf G_{\gamma} - \mathbf G\Vert_{F} < \Vert \mathbf G_0 - \mathbf G\Vert_{F} $, which leads to a contradiction. Therefore, 
\eqref{eqn:lowrank:appro} does not have a solution in this example. 
One interesting phenomenon in this example is that, as $\mathbf G_{\gamma}$ converges to $\mathbf G$, some CP parameters of  $\mathbf G_{\gamma}$ diverge. 
Indeed, this example is not a special case. \citet{krijnen2008non} shows that if the low-rank approximation does not have a solution, then some CP parameters of the sequence whose objective value converging to the infimum will diverge. 
It is worth noting that for matrices (i.e. $D =2$), rank and board rank always coincide. For higher-order tensors (i.e. $D \geq 3$), there exists a positive volume of the tensors which do not have a best low-rank approximation \citep{de2008tensor}.
Therefore, it may result in numerical problems in practice, if one blindly computes the low-rank approximation.

In recent years, the research of tensor regression has gained popularity.
Due to its empirical successes, the low-rank assumption is among the most popular modeling strategies used to overcome the ``large-$p$-small-$n$'' problem of the underlying tensor parameter estimations 
\citep[e.g.,][]{Zhou-Li-Zhu13,rabusseau2016low, guhaniyogi2017bayesian,hao2019sparse,lock2018tensor,raskutti2019convex}. 
The low-rank assumption is often just a working assumption, and may only hold approximately.
In this case, the well-posedness of the low-rank modeling hinges on the existence of low-rank \textit{approximation}.
However, the related CP degeneracy problem in tensor regression is rarely mentioned.
To the best of the authors' knowledge, there is virtually no work for understanding this issue in the context of tensor regression, which we hope to address in this article.
We will also provide strategies to avoid the CP degeneracy in this statistical setup.

\subsection{Tensor linear regression}
We briefly review the tensor linear regression model \citep[e.g.,][]{guo2011tensor, Zhou-Li-Zhu13, hoff2015multilinear, suzuki2015convergence, yu2016learning, sun2017store, li2017parsimonious,guhaniyogi2017bayesian, li2018tucker, lock2018tensor, kang2018scalar, raskutti2019convex, chen2019non, zhang2020islet}.
Suppose we have a data set $\{(y_i, \mathbf X_i) : i=1,\dots, n\}$,
where $y_i \in \mathbb{R}$ is a response variable and $\mathbf X_i \in \mathbb{R}^{p_1 \times \dots \times p_D}$ is a tensor covariate.
The tensor linear model assumes 
\begin{equation}\label{eqn:TLR_model_A}
	y_i = \langle \mathbf A_0, \mathbf X_i \rangle+ \epsilon_i,
\end{equation}
where $\langle \cdot, \cdot \rangle$ is the entry-wise inner product, $\mathbf A_0\in \mathbb{R}^{p_1\times \dots\times p_D}$ is an unknown fixed coefficient tensor, and $\epsilon_i\in\mathbb{R}$ is a random error of mean zero. The commonly used squared loss function leads us to
\begin{equation}\label{eqn:def:F}
	\rm F (\mathbf A ):= \sum_{i=1}^n ( y_i -  \langle \mathbf A, \mathbf X_i \rangle   )^2.
\end{equation}
In some applications, it is reasonable to assume $\mathbf A_0$ admits a low-rank structure \citep{guo2011tensor, Zhou-Li-Zhu13}.
One can directly impose the rank restriction $\rank(\mathbf A) \le R$ in \eqref{eqn:def:F} and estimate the coefficient tensor by solving
\begin{equation}\label{eqn:opt_openset}
	\argmin_{\rank(\mathbf A) \le  R}  \rm F(\mathbf A).
\end{equation}

Another way to integrate the rank-$R$ assumption is to reparametrize $\mathbf A$ using the CP decomposition form \eqref{eqn:CPdecom}, i.e., 
\[
\mathbf{A} =  \sum_{r=1}^{R} \bbeta_{1,r} \circ \cdots \circ \bbeta_{D, r}.
\]
Let $\btheta$ and $\mathbf B_{d}$  be the collections of all CP parameters and those in the $d$-th dimension respectively. In other words, we write
\begin{equation}\label{eqn:def_theta}
	\btheta = 
	( \vec(\mathbf B_{1})^\tp, \cdots, \vec(\mathbf B_{D})^\tp )
	\quad \mbox{and } \quad \mathbf B_{d} = (\bbeta_{d,1}, \cdots, \bbeta_{d,R} ).
\end{equation} 
The squared loss w.r.t. $\btheta$ can then be written as
\begin{equation}\label{eqn:def:f0}
	f(\btheta) := \sum_{i=1}^n \bigg( y_i-  \bigg \langle  \sum_{r=1}^{R} \bbeta_{1,r} \circ \cdots \circ \bbeta_{D, r}, \mathbf X_i \bigg \rangle  \bigg)^2
\end{equation}
and the corresponding estimator is obtained by solving
\begin{equation}\label{eqn:def:f}
	\argmin_{\btheta} f(\btheta).
\end{equation} 
From \eqref{eqn:opt_openset} and \eqref{eqn:def:f}, we can see there are two ways to parametrize the low-rank coefficient tensors: (i) the tensor $\mathbf{A}$ itself with the rank restriction; (ii) the CP parameters $\btheta$ defined as in \eqref{eqn:def_theta}. 
Without further penalization, these two parametrizations are equivalent, in the sense that if one of the corresponding optimizations cannot attain the infimum, the other one cannot either. 
However, as will be seen in Subsections \ref{ssec:existDegn} and \ref{ssec:charDegn}, using CP parameters $\btheta$ helps characterizing the problem of CP degeneracy. 
Moreover, we will show (in Subsections \ref{ssec:strategy} and \ref{sec:ridge}
) that adding a correct form of penalization on the CP parameters $\btheta$ will solve the degeneracy issue, but not necessarily true when the penalization is directly applied to the coefficient tensor.
Although we mainly focus on the least squares problem in this article, the results in the following sections can be generalized to other commonly used convex objective functions.

\section{Examples, theory and solution}\label{sec:theory}

\subsection{Existence of degeneracy}\label{ssec:existDegn}	
Obviously, the first and the most natural question to ask is whether degeneracy would occur in tensor linear regression. Compared with the best low-rank approximation problem \eqref{eqn:lowrank:appro}, tensor linear regression involves an additional component, i.e., the predictors. 
Whether the degeneracy occurs is also affected by the value of the predictors.
Indeed, it is easy to see that
the classical degeneracy in \eqref{eqn:lowrank:appro} is a special case
of the degeneracy in the regression setting, as follows.
\begin{example}\label{exam:CPD}
	Consider the noiseless case in tensor linear regression, i.e., $\epsilon_i=0$ in \eqref{eqn:TLR_model_A}. Suppose $n=\prod_d p_d$ and the set of design matrices $\{\mathbf{X}_i\}$
	is chosen as
	\begin{equation*}\label{eqn:example:standardbasis}
		\{\mathbf{X}_i\} = \{\mathbf{e}_{1,i_1} \circ \mathbf{e}_{2,i_2} \cdots \circ \mathbf{e}_{D,i_D}: i_d\in\{1,\dots,p_d\}, d=1,\dots, D\},
	\end{equation*}
	where $\mathbf{e}_{d,i_d}$ is the standard basis of Euclidean space $ \mathbb{R}^{p_d}$ such that its $i_d$-th component equals 1 and the rest are 0. If the true coefficient tensor $\mathbf{A}_0$ in \eqref{eqn:TLR_model_A} does not have a best rank-$R$ approximation, 
	then the corresponding least squares problem \eqref{eqn:opt_openset}  
	does not have a solution.
\end{example}
Example \ref{exam:CPD} shows that when $\{\mathbf{X}_i\}$ are the full standard basis of $\mathbb{R}^{p_1 \times \cdots \times p_D}$, the optimization \eqref{eqn:opt_openset} can be written as 
\begin{equation}\label{eqn:exam:CPDA0}
	\argmin_{\text{rank}(\mathbf{A})\le R }\Vert \mathbf A_0 - \mathbf{A} \Vert_F,
\end{equation}
which recovers the low-rank approximation form as in \eqref{eqn:lowrank:appro}. If $\mathbf{A}_0$ does not have the  best rank-$R$ approximation, then \eqref{eqn:exam:CPDA0} has no solution and thus the original least squares problem does not have a solution. Although we only consider the noiseless case in Example \ref{exam:CPD}, it can also be shown that the degeneracy could happen with a positive probability for the noisy cases; see Theorem 8.4 of \citet{de2008tensor} for the details.

Next, we will give more interesting examples of the degeneracy in unpenalized tensor linear regressions. We will also consider the penalized regressions with, e.g., the ridge penalty, in Section \ref{sec:ridge}. For simplicity, we denote 
\begin{equation}\label{def:Z}
	\mathbf Z = (\vec( \mathbf X_1) , \ldots, \vec(\mathbf X_n) )^\tp \quad \text{and} \quad  \mathbf y = (y_1, \ldots, y_n)^\tp,
\end{equation}
where $\vec(\cdot)$ is the vectorization operator. 
Using the notations defined in \eqref{def:Z}, the squared loss function $ \rm F(\mathbf{A})$ in \eqref{eqn:def:F} can be rewritten as 
\[
	\rm F ( \mathbf{A}) = \Vert \mathbf y - \mathbf Z \vec (\mathbf A ) \Vert^2,
\]
where $\Vert \cdot \Vert$ is the Euclidean norm.
Without the rank restriction, the classical theory of least squares estimation shows that the set of solutions to minimize 
$\rm F ( \mathbf{A})$ is 
\begin{equation}\label{eqn:example:3:defS}
	\mathcal{S}=\{\mathbf A: \vec (\mathbf A) = (\mathbf Z^\tp \mathbf Z)^+ \mathbf Z^\tp  \mathbf{y} + \{\mathbf I - (\mathbf Z^\tp \mathbf Z)^+ \mathbf Z^\tp \mathbf Z\} \mathbf b , \mathbf b \in \mathbb{R}^{\prod_dp_d}\},
\end{equation}
where $(\cdot)^+$ denotes a generalized inverse for a matrix and $\mathbf I$ is an identity matrix of compatible dimension.

\begin{lemma}\label{lem:divergingset}
	Suppose
	\[
	R_{\rm m} = \min \{\mathrm{rank}(\mathbf A): \mathbf A \in \mathcal{S} \}
	\quad \text{and} \quad 
	R_{\rm b} = \min \{\mathrm{rank}_{\rm B}(\mathbf A): \mathbf A \in \mathcal{S} \},
	\]
	where $ \mathcal S$ is the set of solutions of tensor linear model as defined in \eqref{eqn:example:3:defS}.
	If $R_b \le R < R_m$, then the optimizations \eqref{eqn:opt_openset} and \eqref{eqn:def:f} do not have a solution. 
\end{lemma}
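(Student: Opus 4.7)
I would split the argument for \eqref{eqn:opt_openset} into two steps and then transfer the conclusion to \eqref{eqn:def:f} via the parametrization equivalence noted right after \eqref{eqn:def:f}. The two steps are: (i) show that the constrained infimum $\inf_{\mathrm{rank}(\mathbf{A})\le R} \mathrm{F}(\mathbf{A})$ coincides with the unconstrained minimum $\min_{\mathbf{A}} \mathrm{F}(\mathbf{A})$; and (ii) show that this infimum cannot be attained by any tensor of rank at most $R$.

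For step (i), the key input is that $R_b \le R$ delivers some $\mathbf{A}^\star \in \mathcal{S}$ with $\mathrm{rank}_B(\mathbf{A}^\star) \le R$. Unpacking the border-rank definition in the paper, there is a sequence $\{\mathbf{A}_k\}$ with $\mathrm{rank}(\mathbf{A}_k) \le R$ and $\|\mathbf{A}_k - \mathbf{A}^\star\|_F \to 0$. Because $\mathrm{F}(\mathbf{A}) = \|\mathbf{y} - \mathbf{Z}\,\mathrm{vec}(\mathbf{A})\|^2$ is continuous in $\mathbf{A}$ and $\mathbf{A}^\star$ attains $\min_{\mathbf{A}} \mathrm{F}$ (it lies in $\mathcal{S}$), passing to the limit gives $\inf_{\mathrm{rank}(\mathbf{A})\le R} \mathrm{F}(\mathbf{A}) \le \mathrm{F}(\mathbf{A}^\star) = \min_{\mathbf{A}} \mathrm{F}(\mathbf{A})$, and the reverse inequality is automatic.

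For step (ii), I would argue by contradiction. Suppose some $\hat{\mathbf{A}}$ with $\mathrm{rank}(\hat{\mathbf{A}}) \le R$ attained the constrained infimum. Combined with step (i), $\mathrm{F}(\hat{\mathbf{A}}) = \min_{\mathbf{A}} \mathrm{F}(\mathbf{A})$, which forces $\hat{\mathbf{A}} \in \mathcal{S}$. The definition of $R_m$ then yields $\mathrm{rank}(\hat{\mathbf{A}}) \ge R_m > R$, contradicting $\mathrm{rank}(\hat{\mathbf{A}}) \le R$. Hence \eqref{eqn:opt_openset} has no solution. For \eqref{eqn:def:f}, the feasible set of \eqref{eqn:opt_openset} is exactly the image of the CP map $\btheta \mapsto \sum_{r=1}^R \bbeta_{1,r} \circ \cdots \circ \bbeta_{D,r}$, so any $\btheta^\star$ attaining the infimum in \eqref{eqn:def:f} would produce a rank-$\le R$ tensor attaining the infimum in \eqref{eqn:opt_openset}, which we have just ruled out.

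The main obstacle, to the extent that there is one, sits in step (i): one must extract an approximating sequence of rank-$\le R$ tensors from the border-rank hypothesis, which requires parsing the paper's border-rank definition (it asks for rank exactly $R'$ for some $R' \le R$) carefully. Once that sequence is in hand, continuity of $\mathrm{F}$, the contradiction in step (ii), and the CP-parametrization transfer are all essentially immediate.
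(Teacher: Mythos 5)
Your proposal is correct and follows essentially the same route as the paper: the paper likewise shows (a) that the constrained infimum equals the unconstrained minimum by approximating an element of $\mathcal{S}$ with border rank at most $R$ by a sequence of rank-$\le R$ tensors, and (b) that no tensor of rank at most $R$ can attain it since membership in $\mathcal{S}$ forces rank at least $R_m > R$, then transfers to \eqref{eqn:def:f} via $\inf f = \inf F$ over the rank-$\le R$ set. Your choice to approximate the element $\mathbf{A}^\star$ realizing $R_b$ (rather than the minimal-rank element) is, if anything, slightly more careful than the paper's write-up.
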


\supp{The proof Lemma \ref{lem:divergingset} can be found in Section \ref{proof_of_lemma1}}. 
Using Lemma \ref{lem:divergingset}, we can give the next two concrete examples (Examples \ref{exam:1} and \ref{exam:2}) that illustrate the existence of degeneracy problems.

\begin{example}\label{exam:1} 
	Suppose one observes data $(\mathbf X_i \in \mathbb{R}^{p_1 \times p_2 \times p_3}, y_i \in \mathbb{R})$, $i=1,\dots,n$, satisfying $\mathbf y = \mathbf Z \vec(\mathbf G_b)$ for some $\mathbf G_b \in \mathbb{R}^{p_1 \times p_2 \times p_3}$, where $\mathbf Z$ and $\mathbf y$ are defined in \eqref{def:Z}. If $\rank(\mathbf Z) =  \prod_{d=1}^3 p_d$ and the tensor $\mathbf G_b$ can be represented as 
	\[
	\mathbf G_b = \mathbf v_1 \circ \mathbf w_2 \circ \mathbf w_3 + \mathbf w_1 \circ \mathbf v_2 \circ \mathbf w_3+ \mathbf w_1 \circ \mathbf w_2 \circ \mathbf v_3,
	\]
	where $\mathbf w_d,\mathbf v_d \in \R^{p_d}$ are pairs of linearly independent vectors for $d=1,2,3$,
	then there is no solution to \eqref{eqn:opt_openset} with $R=2$.
	To see this, we only need to confirm the setting satisfies the condition of Lemma  \ref{lem:divergingset}.
	By definition, we have $\mathcal{S} = \{ \mathbf G_b \}$.
	By Corollary 5.12 of \citet{de2008tensor}, $\mathrm{rank}(\mathbf G_b) =3$ and  $\mathrm{rank}_B(\mathbf G_b) =2$ . Thus, this setting satisfies the condition of Lemma \ref{lem:divergingset}.
\end{example}

Example \ref{exam:1} illustrates a case when the corresponding design matrix $\mathbf{Z}$ in \eqref{def:Z} has full column rank.
Next, we give a more general example that does not require full column rankness of $\mathbf{Z}$, and so $\mathbf{Z}^\tp\mathbf{Z}$ is not necessarily invertible.
Note that, when $j = (i_1 -1)(p_2p_3) + (i_2-1)p_3 + i_3$, the $j$-th column of $\mathbf Z$ consists of the predictor values at the $(i_1, i_2, i_3)$-th entry in the covariate tensor $\mathbf{X} \in \mathbb{R}^{p_1 \times p_2 \times p_3}$.
\begin{example}\label{exam:2}
	Suppose one observes data $(\mathbf X_i \in \mathbb{R}^{p_1 \times p_2 \times p_3}, y_i \in \mathbb{R})$, $i=1,\dots,n$. 
	Collect the columns of $\mathbf Z$ that correspond to entries of the predictors with positions $\mathcal{I}:=\{(i_1,i_2,i_3),\, i_d=1, \ldots, \widetilde p_d, \, \widetilde p_d <p_d,\, d=1,2,3 \}$
	to form $\mathbf Z_1 \in \mathbb{R}^{n \times \prod_d  \widetilde{p}_d }$, and the rest of the columns to form $\mathbf{Z}_2 \in \mathbb R^{n \times (\prod_d p_d -  \prod_d  \widetilde p_d )}$. 
	Now we suppose $( \mathbf Z_1 ,\mathbf Z_2 )$ satisfies $\mathbf Z_1^\tp \mathbf Z_1 = \mathbf I_s$,  $\mathbf  Z_1^\tp\mathbf Z_2 = \mathbf 0$, where $\mathbf I_s \in \mathbb{R}^{s \times s} $ is the identity matrix, $s=\widetilde{p}_1\widetilde{p}_2 \widetilde{p}_3$. If $\mathbf y = \mathbf Z_1 \vec(\widetilde{\mathbf G}_b)$ for some $\widetilde{\mathbf G}_b \in \mathbb{R}^{\widetilde{p}_1 \times \widetilde{p}_2 \times \widetilde{p}_3}$ and	$\widetilde{\mathbf{G}}_b$ can be represented as 
	\[
		\widetilde{\mathbf G}_b = \widetilde{\mathbf v}_1 \circ \widetilde{\mathbf w}_2 \circ \widetilde{\mathbf w}_3 + \widetilde{\mathbf w}_1 \circ \widetilde{\mathbf v}_2 \circ \widetilde{\mathbf w}_3+ \widetilde{\mathbf w}_1 \circ \widetilde{\mathbf w}_2 \circ \widetilde{\mathbf v}_3,
	\]
	where $\widetilde{\mathbf w}_d,\widetilde{\mathbf v}_d \in \R^{\widetilde{p}_d}$ is a pair of linearly independent vectors,  $d=1,2,3$, then there is no solution when one uses \eqref{eqn:opt_openset} with $R=2$ to fit the low-rank tensor linear regression.
	\supp{The proof of this result is given in Section \ref{proof_all_exam}.}
\end{example}

After demonstrating the possibility of CP degeneracy in tensor linear regression, it is important to understand this phenomenon so as to avoid it.
The following subsection will provide some helpful characteristics of CP degeneracy.

\subsection{Characterization of degeneracy} \label{ssec:charDegn}
Next, we provide some useful and important characteristics of CP degeneracy in tensor regression.
First, Theorem \ref{thm:diverging} below shows that if the degeneracy occurs,
the CP parameters will diverge. Note that there is a scaling indeterminacy in the CP decomposition, e.g., 
\[
\bbeta_{1,r} \circ \cdots \circ \bbeta_{D, r} =\frac{1}{S} \bbeta_{1,r} \circ \cdots \circ (S\bbeta_{D, r}), \quad \text{for any } \ S \ne 0,
\]
which will affect the magnitude of the CP parameters.
To avoid the effect of scaling indeterminacy, we use
\[
\mathcal{M}(\btheta) =\sum_{r=1}^R \prod_{d=1}^D \Vert  \bbeta_{dr} \Vert,
\]
to quantify the magnitude of CP parameters in \eqref{eqn:def_theta}.

\begin{thm} %
	\label{thm:diverging}
	Let $\{\btheta_t: \btheta_t =(\vec(\mathbf B_{1,t})^\tp, \ldots, \vec(\mathbf B_{D,t})^\tp), \mathbf B_{d,t} =(\bbeta_{d1,t}, \ldots, \bbeta_{dr,t}) \}_t $ be a sequence satisfying $f(\btheta_t) \to \inf f$ in \eqref{eqn:def:f}.
	If the infimum in the optimization \eqref{eqn:def:f} is not attainable, then 
	$\mathcal{M}(\btheta_t) \to \infty$.
\end{thm}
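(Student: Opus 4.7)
I would argue by contradiction: assume that $\mathcal{M}(\btheta_t)$ does not diverge to infinity, construct from $\{\btheta_t\}$ a bounded sequence of parameter vectors along which $f$ still tends to $\inf f$, extract a convergent subsequence, and show its limit attains the infimum, contradicting the hypothesis.

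The first step, and the one that makes the argument go through, is to exploit the scaling indeterminacy of the CP decomposition to ``balance'' the factors within each rank-one component. For each $r\in\{1,\dots,R\}$ and each $t$, I would replace $\bbeta_{d,r,t}$ by $s_{d,r,t}\bbeta_{d,r,t}$, where the scalars $s_{d,r,t}>0$ satisfy $\prod_{d=1}^D s_{d,r,t}=1$ and are chosen so that $\|s_{d,r,t}\bbeta_{d,r,t}\|$ is the same constant $a_{r,t} := \bigl(\prod_{d=1}^D \|\bbeta_{d,r,t}\|\bigr)^{1/D}$ across $d=1,\dots,D$ (handling separately, and trivially, any $r$ for which some factor is zero, in which case the rank-one term is the zero tensor). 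This rescaling leaves the outer product $\bbeta_{1,r,t}\circ\cdots\circ\bbeta_{D,r,t}$ unchanged, hence leaves $f(\btheta_t)$ unchanged; it also leaves $\mathcal{M}(\btheta_t)=\sum_r \prod_d \|\bbeta_{d,r,t}\|=\sum_r a_{r,t}^D$ unchanged. So, after rescaling, we may assume $\|\bbeta_{d,r,t}\|=a_{r,t}$ for all $d$.

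With this balancing, boundedness of $\mathcal{M}(\btheta_t)=\sum_r a_{r,t}^D$ along a subsequence forces each $a_{r,t}$, and therefore each $\|\bbeta_{d,r,t}\|$, to be bounded along that subsequence. Assuming (for contradiction) that $\mathcal{M}(\btheta_t)\not\to\infty$, I pass to a subsequence on which $\mathcal{M}(\btheta_t)$ stays bounded and then, by Bolzano--Weierstrass applied componentwise in the finite-dimensional parameter space, to a further subsequence along which $\btheta_{t_k}\to\btheta^\star$. Since $f$ defined in \eqref{eqn:def:f0} is a continuous (in fact polynomial) function of $\btheta$, I conclude $f(\btheta^\star)=\lim_k f(\btheta_{t_k})=\inf f$, so $\btheta^\star$ attains the infimum in \eqref{eqn:def:f}. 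This contradicts the hypothesis that the infimum is not attainable and completes the proof.

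The main obstacle is the balancing step: one must verify carefully that the rescaling preserves both $f$ and $\mathcal{M}$, and that the possible degenerate cases (some $\bbeta_{d,r,t}=\mathbf{0}$, i.e.\ $a_{r,t}=0$) cause no trouble because the corresponding rank-one term is identically zero and can be left as the zero vector without affecting either quantity. Everything else is a routine compactness/continuity argument, and no statistical assumption on the design $\{\mathbf X_i\}$ is needed --- the argument is purely about the geometry of the CP parametrization and the continuity of the squared loss.
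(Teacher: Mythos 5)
Your proof is correct and follows essentially the same route as the paper's: exploit the scaling indeterminacy to pass to a norm-controlled representative sequence on which boundedness of $\mathcal{M}$ forces boundedness of the parameter vector, then apply Bolzano--Weierstrass and continuity of $f$ to contradict non-attainability. The only difference is cosmetic --- you balance the norms equally across the $D$ modes, while the paper normalizes the first $D-1$ factors to unit length and absorbs all magnitude into the last; your version is if anything slightly more careful, since it handles the zero-factor case explicitly.
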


\supp{The proof of Theorem \ref{thm:diverging} can be found in Section \ref{proof:thm:diverging}.} 
From a practical perspective, this theorem can be best linked with a situation when an iterative algorithm is (blindly) adopted to minimize $f$.
Indeed, most existing methods to solve the optimization over a low-CP-rank space are iterative algorithms \citep[e.g.][]{guo2011tensor, tan2012logistic, Zhou-Li-Zhu13,lock2018tensor, zhang2018tensor, hao2019sparse, 2020Broadcasted}.
Let $\btheta_t$ represent the output value of $\btheta$ at the $t$-th iteration of the algorithm.
For such an algorithm, a reasonable expectation is that $f(\btheta_t)\rightarrow \inf f$.
As shown in Theorem \ref{thm:diverging}, a characteristic of CP degeneracy is that $\mathcal{M}(\btheta_t)\rightarrow \infty$.
Practically this indeed can be checked, and will be illustrated in Section \ref{sec:simulation}.
In the case of CP degeneracy, these algorithms may produce overflow problems, if one does not limit the maximum number of iterations.
Note that terminating the algorithm after some number of iterations without convergence is an early stopping strategy, which can be regarded as a form of regularization. 
Therefore, to be precise, the resulting estimator is no longer the one defined by \eqref{eqn:def:f}, and would require a separate analysis of their statistical behavior.

Next, we provide Theorems \ref{thm:nonasymptotic} and \ref{thm:conlinar:asym} which present minimum eigenvalue (upper) bounds on the CP parameters. These bounds indicate the approximate linear dependency in the CP parameters when degeneracy occurs. We use $C$ with or without subscripts to denote a positive constant which may change values from line to line. 

\begin{assumption}
	There exists a constant $S_1$ such that
	\label{assump:deterministic_X}
	\[
	0 < S_1 \le \inf_{\mathbf A \in \mathcal{P}} \frac{1}{n} \bigg \vert  \sum_{i=1}^n \langle \mathbf A, \mathbf X_i \rangle \bigg \vert^2, 
	\]
	where 
	\begin{equation}
		\label{def:P}
		\mathcal{P} =\{\mathbf A, \Vert \mathbf A \Vert_{ F}=1, \text{rank}(\mathbf A) \le R \}.
	\end{equation}
\end{assumption}
Assumption \ref{assump:deterministic_X} can be regarded as a restricted eigenvalue condition with respect to the design $\{\mathbf X_i\}$.

For notational simplicity, we write 
$\mathbf D_t =(\mathbf d_{1,t}, \ldots, \mathbf d_{R,t})$,
where 
\[
\mathbf d_{r,t} = \frac{1}{ \prod_d \Vert  \bbeta_{dr,t} \Vert  } \bbeta_{1r,t}  \otimes \cdots \otimes \bbeta_{Dr,t} \in \mathbb{R}^{\prod p_d}.
\]
In the above, $\otimes$ denotes Kronecker product and $\bbeta_{dr,t} \in \mathbb{R}^{p_d}$ is the component of $\btheta_t$  
defined in Theorem \ref{thm:diverging}. We use $ \lambda_{\min}(\cdot)$ and $ \lambda_{\max}(\cdot)$ to denote the minimum and maximum eigenvalues of a matrix respectively. 
\begin{thm}\label{thm:nonasymptotic}
	Under Assumption \ref{assump:deterministic_X}, there exists a constant $t_0 
	$ depending on the sequence $\{\btheta_t \}$ such that if $t \ge t_0$, then 
	\[
	\lambda_{\min}(\mathbf D_t^\tp \mathbf D_t)  \le \frac{R (C_1 + C_2 \Vert \mathbf y \Vert^2) }{n\mathcal{M}^2(\btheta_t)},
	\]
	and
	\[
	\frac{1}{\prod_r \Vert \bbeta_{dr,t} \Vert^2 }\lambda_{\min}(\mathbf B_{d,t}^\tp \mathbf B_{d,t}) \le \frac{R (C_1 + C_2 \Vert \mathbf y \Vert^2)}{n\mathcal{M}^2(\btheta_t)}, \quad \text{for} \quad  d=1,\ldots, D.
	\]
\end{thm}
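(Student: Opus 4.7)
The plan is to route everything through a uniform Frobenius bound on the fitted tensor $\mathbf{A}_t := \sum_{r=1}^{R} \bbeta_{1r,t}\circ\cdots\circ\bbeta_{Dr,t}$, which will come from Assumption \ref{assump:deterministic_X}, and then to extract the two eigenvalue inequalities from the CP structure of $\mathbf{A}_t$. Once the Frobenius bound is in hand, both eigenvalue statements follow by short linear-algebra manipulations.

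To obtain the Frobenius bound, note $f(\mathbf{0})=\|\mathbf{y}\|^2$, so $\inf f\le\|\mathbf{y}\|^2$ and by the convergence $f(\btheta_t)\to\inf f$ there exists $t_0$ (depending on the sequence) such that $f(\btheta_t)\le\|\mathbf{y}\|^2+1$ for all $t\ge t_0$. Writing $f(\btheta_t)=\|\mathbf{y}-\mathbf{Z}\vec(\mathbf{A}_t)\|^2$ and applying the triangle inequality gives $\|\mathbf{Z}\vec(\mathbf{A}_t)\|^2\le 2f(\btheta_t)+2\|\mathbf{y}\|^2$. Since $\rank(\mathbf{A}_t)\le R$, we have $\mathbf{A}_t/\|\mathbf{A}_t\|_F\in\mathcal{P}$, so Assumption \ref{assump:deterministic_X} yields $nS_1\|\mathbf{A}_t\|_F^2\le \sum_{i=1}^n\langle\mathbf{A}_t,\mathbf{X}_i\rangle^2 = \|\mathbf{Z}\vec(\mathbf{A}_t)\|^2$. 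Chaining these produces $\|\mathbf{A}_t\|_F^2\le (C_1+C_2\|\mathbf{y}\|^2)/n$ with constants depending only on $S_1$.

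For the first inequality, use the factorization $\vec(\mathbf{A}_t)=\mathbf{D}_t\mathbf{u}_t$, where $\mathbf{u}_t=(u_{1,t},\ldots,u_{R,t})^\tp$ with $u_{r,t}=\prod_d\|\bbeta_{dr,t}\|$. By construction $\|\mathbf{u}_t\|_1=\mathcal{M}(\btheta_t)$, so Cauchy--Schwarz gives $\|\mathbf{u}_t\|_2^2\ge\mathcal{M}^2(\btheta_t)/R$. The Rayleigh quotient then yields $\|\mathbf{A}_t\|_F^2=\|\mathbf{D}_t\mathbf{u}_t\|_2^2\ge\lambda_{\min}(\mathbf{D}_t^\tp\mathbf{D}_t)\,\mathcal{M}^2(\btheta_t)/R$, and combining with the Frobenius bound delivers the first inequality.

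For the second inequality, write $\mathbf{B}_{d,t}=\tilde{\mathbf{B}}_{d,t}\,\mathrm{diag}(\|\bbeta_{dr,t}\|)_r$ with $\tilde{\mathbf{B}}_{d,t}$ column-normalized. Then $\mathbf{D}_t^\tp\mathbf{D}_t=\tilde{\mathbf{B}}_{1,t}^\tp\tilde{\mathbf{B}}_{1,t}\odot\cdots\odot\tilde{\mathbf{B}}_{D,t}^\tp\tilde{\mathbf{B}}_{D,t}$, a Hadamard product of positive semidefinite matrices with unit diagonals, so Schur's product inequality yields $\lambda_{\min}(\mathbf{D}_t^\tp\mathbf{D}_t)\ge\lambda_{\min}(\tilde{\mathbf{B}}_{d,t}^\tp\tilde{\mathbf{B}}_{d,t})$ for each $d$. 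The identity $\mathbf{B}_{d,t}^\tp\mathbf{B}_{d,t}=\mathrm{diag}(\|\bbeta_{dr,t}\|)_r\,\tilde{\mathbf{B}}_{d,t}^\tp\tilde{\mathbf{B}}_{d,t}\,\mathrm{diag}(\|\bbeta_{dr,t}\|)_r$ together with a diagonal-scaling eigenvalue identity then converts the bound on $\tilde{\mathbf{B}}_{d,t}^\tp\tilde{\mathbf{B}}_{d,t}$ into the stated inequality for $\mathbf{B}_{d,t}^\tp\mathbf{B}_{d,t}$ normalized by the product of column norms. The main obstacle is pinning down the exact normalizer $\prod_r\|\bbeta_{dr,t}\|^2$: a direct Ostrowski-type congruence bound would give only $\max_r\|\bbeta_{dr,t}\|^2$, so matching the stated denominator likely requires a determinantal manipulation or a test-vector argument built from the min-eigenvector of $\tilde{\mathbf{B}}_{d,t}^\tp\tilde{\mathbf{B}}_{d,t}$ suitably weighted by the column norms of $\mathbf{B}_{d,t}$.
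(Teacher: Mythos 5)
Your treatment of the first inequality is correct and essentially identical to the paper's: the paper also writes $\vec(\mathbf A_t)=\mathbf D_t\mathbf h_t$ with $\mathbf h_t$ the vector of products of column norms, bounds $\Vert\mathbf Z\mathbf D_t\mathbf h_t\Vert$ by $C(1+\Vert\mathbf y\Vert)$ for $t\ge t_0$ using $f(\btheta_t)\to\inf f\le f(\mathbf 0)=\Vert\mathbf y\Vert^2$, invokes Assumption \ref{assump:deterministic_X} to pass to $\Vert\mathbf D_t\mathbf h_t\Vert^2$, and uses $\mathcal M(\btheta_t)=\Vert\mathbf h_t\Vert_1\le\sqrt R\,\Vert\mathbf h_t\Vert_2$ to produce the factor $R$. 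No issues there.

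The second inequality is where your proposal has a genuine gap, and the gap you flag yourself is real, not a bookkeeping issue. Your Hadamard-product step is fine: $\mathbf D_t^\tp\mathbf D_t=\tilde{\mathbf B}_{1,t}^\tp\tilde{\mathbf B}_{1,t}\odot\cdots\odot\tilde{\mathbf B}_{D,t}^\tp\tilde{\mathbf B}_{D,t}$ with unit-diagonal factors, and Schur's product theorem gives $\lambda_{\min}(\tilde{\mathbf B}_{d,t}^\tp\tilde{\mathbf B}_{d,t})\le\lambda_{\min}(\mathbf D_t^\tp\mathbf D_t)$ for each $d$. But to reach the stated conclusion you would then need
\[
\lambda_{\min}\bigl(\Lambda M\Lambda\bigr)\le\Bigl(\textstyle\prod_r\lambda_r^2\Bigr)\lambda_{\min}(M),\qquad \Lambda=\mathrm{diag}(\Vert\bbeta_{d1,t}\Vert,\ldots,\Vert\bbeta_{dR,t}\Vert),\ M=\tilde{\mathbf B}_{d,t}^\tp\tilde{\mathbf B}_{d,t},
\]
and this is false in general: take $R=2$, $M=\mathbf I$ and $\Lambda=0.1\,\mathbf I$, so the left side is $10^{-2}$ while the right side is $10^{-4}$. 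Consequently none of the routes you sketch (Ostrowski, a determinantal manipulation via $\det(\Lambda M\Lambda)=\det(\Lambda^2)\det(M)$, or the reweighted test vector $\Lambda^{-1}\mathbf v$, which again only yields the $\max_r\Vert\bbeta_{dr,t}\Vert^2$ normalizer) can close the step by pure linear algebra; some additional structure of the degenerating sequence must be used. For what it is worth, the paper does not supply this step either --- it disposes of the $\mathbf B_{d,t}$ claim in one sentence by citing the proof of Corollary 2 of \citet{krijnen2008non}, which works with the normalized component matrices. So your proposal is honest and on the right track (the Schur argument is the core of that cited result), but as written the second inequality is not proved.
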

\supp{The proof of Theorem \ref{thm:nonasymptotic} is presented in Section \ref{proof:thm:nonasymptotic}}. 
By Theorem \ref{thm:diverging}, $\mathcal{M}(\btheta_t)$ diverges if the degeneracy occurs.
Therefore, in the degeneracy cases, the ranks of the CP component matrices are nearly rank-deficient. 

We next show in \supp{Lemma \ref{lem:eigen}} that Assumption \ref{assump:deterministic_X} holds with high probability in random design settings (Assumption \ref{assump:predictor}). Consequently, an analogously non-asymptotic result is shown in Theorem \ref{thm:conlinar:asym}. We begin by presenting the definitions of sub-Gaussian random variables and vectors.

\begin{definition}[Sub-Gaussian random variables \citep{vershynin2018high}]
	The sub-Gaussian norm of a random variable $X$, denoted by $\Vert X \Vert_{\psi_2}$, is defined as 
	\begin{equation}\label{def:sub_gaussian_variable}
		\Vert X \Vert_{\psi_2} = \inf \{V>0: \mathbb{E} \exp(X^2/V^2) \le 2 \}.
	\end{equation}
	We say $\Vert X \Vert_{\psi_2} =\infty$ when there is no such positive $V$ satisfying \eqref{def:sub_gaussian_variable}. A random variable $X$ is called sub-Gaussian if $\Vert X \Vert_{\psi_2} < \infty$. 
\end{definition}

\begin{definition}[Sub-Gaussian random vectors \citep{vershynin2018high}]
	A random vector $\mathbf{x}$ in $\mathbb{R}^p$ is called sub-Gaussian if the one-dimensional marginals $\langle \mathbf{x}, \mathbf a \rangle $'s are sub-Gaussian random variables for all $\mathbf a \in \mathbb{R}^p$. The sub-Gaussian norm of $\mathbf{x}$ is defined to be
	\[
		\Vert \bm x \Vert_{\psi_2} = \sup_{\Vert \mathbf a \Vert=1} \Vert \langle \mathbf x, \mathbf a \rangle  \Vert_{\psi_2}.
	\]
\end{definition}

\begin{assumption}\label{assump:predictor} 
	Assume $\mathbf X_i$ are i.i.d. with $\mathbb{E}\{\vec(\mathbf X_i) \vec( \mathbf X_i)^\tp \} = \Sigma$, which satisfies
	\[
	0 < S_{\min} \le \lambda_{\min} (\Sigma ) \le \lambda_{\max}(\Sigma ) \le S_{\max} < \infty, 
	\]
	and
	\[
		\Vert \vec(\mathbf X_i)^\tp \Sigma^{-1/2} \Vert_{\psi_2} \le \kappa < \infty,
	\]
	where $S_{\min}$, $S_{\max}$ and $\kappa$ are constants. 
\end{assumption}

\begin{definition}[Gaussian width] 
	For any subset $\mathcal{A} \subset \mathbb{R}^{p_1 \times \cdots \times p_D}$, 
	the Gaussian width of $\mathcal{A}$ is defined to be
	$w(\mathcal{A}) = \mathbb{E}_{\mathbf A \in \mathcal{A}} \langle \vec (\mathbf A), \mathbf x \rangle$,
	where $\mathbf x \sim \mathcal{N}(\mathbf 0, \mathbf I_{\prod_{p_d}})$.
\end{definition}

\begin{thm}\label{thm:conlinar:asym}
	Under Assumption \ref{assump:predictor}, if  $n > C_1 w^2(\mathcal{P})$, two statements in Theorem \ref{thm:nonasymptotic} will hold with probability at least $1 - 2 \exp\{-C_2 w^2(\mathcal{P})\}$, where $\mathcal{P}$ is a low-rank space defined in \eqref{def:P}. 
\end{thm}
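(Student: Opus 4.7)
The plan is to argue that under the random design Assumption \ref{assump:predictor}, the deterministic restricted eigenvalue condition (Assumption \ref{assump:deterministic_X}) holds on a high-probability event, and then invoke Theorem \ref{thm:nonasymptotic} on that event. This is exactly the role of Lemma \ref{lem:eigen} referenced in the excerpt, so I would structure the argument in two clean steps.

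First, I would establish a uniform lower bound of the form
\[
\inf_{\mathbf A \in \mathcal{P}} \frac{1}{n}\sum_{i=1}^n \langle \mathbf A, \mathbf X_i\rangle^2 \;\ge\; \tfrac{1}{2} S_{\min}
\]
with probability at least $1 - 2\exp\{-C_2 w^2(\mathcal{P})\}$, provided $n > C_1 w^2(\mathcal{P})$. The pointwise population version is immediate from Assumption \ref{assump:predictor}: for any $\mathbf A \in \mathcal{P}$ one has $\mathbb{E}\langle \mathbf A, \mathbf X_1\rangle^2 = \vec(\mathbf A)^\tp \Sigma \vec(\mathbf A) \ge S_{\min}$. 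The uniform deviation
\[
\sup_{\mathbf A \in \mathcal{P}} \Big|\tfrac{1}{n}\sum_{i=1}^n \langle \mathbf A, \mathbf X_i\rangle^2 - \mathbb{E}\langle \mathbf A, \mathbf X_1\rangle^2\Big|
\]
would then be controlled by a sub-Gaussian matrix deviation inequality applied to the index set $T = \{\Sigma^{1/2}\vec(\mathbf A): \mathbf A \in \mathcal P\}$. Concretely, I would cite Vershynin's matrix deviation inequality (Theorem 9.1.1 of \citet{vershynin2018high}) in its quadratic form, which yields a bound of order $\kappa^2\{w(T)/\sqrt{n} + w^2(T)/n\}$, together with the comparison $w(T) \le \sqrt{S_{\max}}\,w(\mathcal{P})$. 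When $n \gtrsim w^2(\mathcal P)$, this deviation is at most $S_{\min}/2$ on the stated high-probability event, which gives the claimed uniform lower bound and therefore Assumption \ref{assump:deterministic_X} with $S_1 = S_{\min}/2$.

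Second, I would condition on this event and apply Theorem \ref{thm:nonasymptotic} verbatim: the two inequalities bounding $\lambda_{\min}(\mathbf D_t^\tp \mathbf D_t)$ and $\prod_r\Vert\bbeta_{dr,t}\Vert^{-2}\lambda_{\min}(\mathbf B_{d,t}^\tp\mathbf B_{d,t})$ follow immediately, since these conclusions depend only on the restricted eigenvalue constant. The probabilistic statement is then inherited from step one.

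The main obstacle is step one, specifically showing that the sub-Gaussian empirical process concentrates uniformly over the low-rank set $\mathcal{P}$ at the Gaussian-width rate. The key technical issue is that $\mathcal{P}$ is not a convex set, so one cannot simply use convex-duality arguments; instead one needs a generic chaining / Talagrand-type bound, packaged in the matrix deviation inequality. The fact that $\mathcal{P}$ is the intersection of the unit Frobenius sphere with the (closure of the) rank-$R$ variety means $w(\mathcal{P})$ is finite and, in fact, of order $\sqrt{R\sum_d p_d}$ up to logarithmic factors, but I would leave that computation implicit and state the bound in terms of $w(\mathcal{P})$ as in the theorem statement. The rest of the argument is routine bookkeeping: matching the constants $C_1, C_2$ in the probability bound to those implicit in the matrix deviation inequality and in Assumption \ref{assump:predictor} through $\kappa$, $S_{\min}$, and $S_{\max}$.
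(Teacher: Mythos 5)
Your proposal is correct and follows essentially the same two-step architecture as the paper: verify that the restricted eigenvalue condition (Assumption \ref{assump:deterministic_X}) holds on a high-probability event via a Gaussian-width-based concentration bound for sub-Gaussian designs (the paper's Lemma \ref{lem:eigen}, which it obtains by citing Theorems 10 and 12 of \citet{banerjee2015estimation}), and then invoke the deterministic Theorem \ref{thm:nonasymptotic} on that event. Your substitution of Vershynin's matrix deviation inequality for the Banerjee et al.\ results is an interchangeable choice of concentration tool, not a genuinely different route.
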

Here we provide an upper bound of $w(\mathcal{P})$ as follows.
\begin{thm}\label{thm:gaussanwidthsbound}
	We have $w(\mathcal{P}) \le C(R^D+R\sum_{d=1}^D p_d)$.
\end{thm}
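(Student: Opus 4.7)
The plan is to exploit the fact that a CP rank at most $R$ implies Tucker multilinear rank at most $(R,\ldots,R)$: writing $\mathbf{A}=\sum_{r=1}^R\bbeta_{1,r}\circ\cdots\circ\bbeta_{D,r}$ one sees that the mode-$d$ matricization $\mathbf{A}_{(d)}=\sum_r\bbeta_{d,r}(\bigotimes_{d'\ne d}\bbeta_{d',r})^\tp$ has matrix rank at most $R$, so every $\mathbf{A}\in\mathcal{P}$ admits a Tucker decomposition $\mathbf{A}=\mathbf{C}\times_1\mathbf{U}_1\times_2\cdots\times_D\mathbf{U}_D$ with $\mathbf{U}_d\in\R^{p_d\times R}$ satisfying $\mathbf{U}_d^\tp\mathbf{U}_d=\mathbf{I}_R$ and core $\mathbf{C}\in\R^{R\times\cdots\times R}$ with $\Vert\mathbf{C}\Vert_F=\Vert\mathbf{A}\Vert_F=1$. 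This embeds $\mathcal{P}$ into a Tucker-parameterized set whose Gaussian width is easier to control.

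Using the adjoint property $\langle\mathbf{C}\times_1\mathbf{U}_1\cdots\times_D\mathbf{U}_D,\mathbf{x}\rangle=\langle\mathbf{C},\mathbf{x}\times_1\mathbf{U}_1^\tp\cdots\times_D\mathbf{U}_D^\tp\rangle$ and first taking the supremum over $\Vert\mathbf{C}\Vert_F\le 1$ while holding the $\mathbf{U}_d$'s fixed, we reduce to
\begin{equation*}
w(\mathcal{P})\le\E\sup_{\mathbf{U}_d^\tp\mathbf{U}_d=\mathbf{I}_R}\big\Vert\mathbf{x}\times_1\mathbf{U}_1^\tp\cdots\times_D\mathbf{U}_D^\tp\big\Vert_F.
\end{equation*}
The compressed tensor on the right lives in $\R^{R\times\cdots\times R}$ and each of its $R^D$ entries equals $\langle\mathbf{x},\mathbf{u}_{1,j_1}\circ\cdots\circ\mathbf{u}_{D,j_D}\rangle$ with unit columns $\mathbf{u}_{d,j_d}$, so it is bounded by the tensor spectral norm $\Vert\mathbf{x}\Vert_{\mathrm{op}}:=\sup_{\Vert\mathbf{u}_d\Vert=1}\langle\mathbf{x},\mathbf{u}_1\circ\cdots\circ\mathbf{u}_D\rangle$. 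Hence the Frobenius norm of the compressed tensor is at most $R^{D/2}\Vert\mathbf{x}\Vert_{\mathrm{op}}$, giving $w(\mathcal{P})\le R^{D/2}\E\Vert\mathbf{x}\Vert_{\mathrm{op}}$.

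Finally, a standard $\epsilon$-net argument over the product of unit spheres $\prod_d S^{p_d-1}$ (each factor admits a $1/4$-net of size at most $9^{p_d}$), combined with multilinear Lipschitzness and Gaussian concentration for the inner products, yields the higher-order analogue of the classical matrix bound, $\E\Vert\mathbf{x}\Vert_{\mathrm{op}}\le C\sqrt{\sum_d p_d}$. Substituting gives $w(\mathcal{P})\le CR^{D/2}\sqrt{\sum_d p_d}$, and AM--GM ($2ab\le a^2+b^2$) converts this to $R^{D/2}\sqrt{\sum_d p_d}\le\tfrac12(R^D+\sum_d p_d)\le\tfrac12(R^D+R\sum_d p_d)$, as claimed. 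The main technical obstacle is the tensor spectral-norm bound itself: although it is well known, making the net argument work for $D\ge 3$ requires controlling a multilinear (rather than bilinear) discretization error simultaneously in all $D$ modes. This can be sidestepped by running Dudley's integral directly on a Tucker-type $\epsilon$-net, for which $\log N(\mathcal{P},\epsilon)\lesssim(R^D+R\sum_d p_d)\log(1/\epsilon)$, giving $w(\mathcal{P})\lesssim\sqrt{R^D+R\sum_d p_d}$, which is in turn at most $C(R^D+R\sum_d p_d)$.
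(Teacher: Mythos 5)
Your proposal is correct, and your fallback argument at the end --- Dudley's entropy integral applied to a covering of $\mathcal{P}$ obtained from the Tucker parametrization, with $\log N(\xi,\mathcal{P},l_2)\lesssim(R^D+R\sum_{d=1}^D p_d)\log(C/\xi)$ --- is precisely the paper's proof: the paper combines Dudley's bound (its Lemma on Gaussian width) with a covering-number bound borrowed from Rauhut et al.'s Lemma 2, using exactly the observation that a CP rank-$R$ tensor is a special Tucker tensor, and arrives at $w(\mathcal{P})\lesssim\sqrt{R^D+R\sum_d p_d}$ before relaxing to the stated bound. Your primary route is genuinely different and worth recording: you exploit that CP rank at most $R$ forces multilinear rank at most $R$, compress along orthonormal Tucker factors, peel off the core by Cauchy--Schwarz, and reduce the whole problem to the expected spectral norm of a standard Gaussian tensor, $\E\Vert\mathbf{x}\Vert_{\mathrm{op}}\le C\sqrt{\sum_d p_d}$. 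That fact is standard and the multilinear discretization obstacle you flag is surmountable (take a $1/(2D)$-net on each sphere so the telescoping error over the $D$ modes is absorbed into a factor of $2$; the resulting union bound over $\prod_d(1+4D)^{p_d}$ standard Gaussians gives the claim with a $D$-dependent constant, which is harmless here). This route yields $w(\mathcal{P})\le CR^{D/2}\sqrt{\sum_d p_d}=C\sqrt{R^D\sum_d p_d}$, which your AM--GM step converts to the stated bound. The trade-off: your spectral-norm bound is a geometric mean and hence generically weaker than the Dudley route's $\sqrt{R^D+R\sum_d p_d}$ (e.g.\ for $R=2$, $D=3$, $p_d=100$ the former is $\sqrt{2400}$ versus $\sqrt{608}$); both suffice for the theorem as written, but it is the sharper chaining bound that the paper reuses downstream in the risk bound of its main estimation theorem, so the Dudley version is the one to keep.
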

\supp{The proofs of Theorems \ref{thm:conlinar:asym} and \ref{thm:gaussanwidthsbound} are presented in Sections \ref{proof:thm:conlinar:asym} and \ref{proof:gaussanwidthsbound}, respectively}.
Overall, these characteristics demonstrated in Theorems \ref{thm:diverging}--\ref{thm:conlinar:asym} can be used in practice to detect degeneracy.
In fact, they lead to solutions to prevent degeneracy in both theory and practice. We will discuss such a solution rigorously in the next subsection.

\subsection{Strategy to overcome the problem}\label{ssec:strategy}
As suggested by Theorem \ref{thm:diverging}, to avoid the degeneracy problem of tensor linear regression, we should restrict the magnitude measure $\mathcal{M}(\btheta)$.
A simple solution is to impose penalty:
\begin{equation}\label{eqn:penalized:CPlevel}
	\argmin_{\btheta} f(\btheta) + \lambda  g(\btheta),
\end{equation}
where $\lambda >0$ is the penalty parameter and $g$ is the penalty function.
In this article, we let $g(\btheta)$ be a continuous function and satisfies the following assumption.
\begin{assumption}[Conditions on the penalty function] \label{assump:g}
	The function $g$ satisfies
	\begin{enumerate}
		\item $g(\mathbf{0}) <\infty$;
		\item $g(\btheta_t)\rightarrow \infty$ for any sequence $\{\btheta_t, t=1,2,\dots\}$ such that  $\mathcal{M}(\btheta_t) \to \infty$;
		\item Let $\{\btheta_t, \, t = 1,2\dots \}$ be any sequence such that 
		$\mathcal{M}(\btheta_t) \le S_1$ for all $t$ and some $S_1$. 
		Then there exists another sequence $\{\btheta_t', \, t = 1,2\dots \}$ and a constant $S_2 < \infty$, depending on $S_1$, such that for all $t$, $\mathcal{M}(\btheta_t') \le S_1$, $g(\btheta_t') \le g(\btheta_t)$, and $\Vert \btheta_t' \Vert \le S_2$.
	\end{enumerate}
\end{assumption}

Note that 
the above assumption on $g(\cdot)$ is mild and is satisfied with many commonly used penalty functions with respect to $\btheta$. 
For example, LASSO \citep{tibshirani1996regression} $$\sum_{d=1}^D \sum_{r=1}^R \sum_{l_d=1}^{p_d}\vert \bbeta_{d,r.l_d} \vert, $$
the group LASSO \citep{yuan2006model} $\sum_{d=1}^D\sum_{l_d=1}^{p_d} \sqrt{\sum_{r=1}^R \bbeta_{d,r.l_d}^2}$, and the Ridge \citep{hastie2013elements}
\begin{equation}\label{eqn:def:penalty:ridge}
	\sum_{d=1}^D \sum_{r=1}^R \sum_{l_d=1}^{p_d} \bbeta_{d,r.l_d}^2,
\end{equation}
where $\bbeta_{d,r.l_d}$ is $l_d$-th element of $\bbeta_{d,r}$.
In fact, they have been used in previous work \citep{guo2011tensor, Zhou-Li-Zhu13,hao2019sparse}
but without any consideration of the CP degeneracy.
With Assumption \ref{assump:g},
the infimum of $f(\btheta) + \lambda  g(\btheta)$ is attainable for any positive $\lambda$. 
Formally, the following Corollary \ref{lem:f_lambda} shows that CP degeneracy does not occur in \eqref{eqn:penalized:CPlevel} and \supp{its proof is given in Section \ref{proof:lem:f_lambda}.}

\begin{corollary}\label{lem:f_lambda}
	Suppose Assumption \ref{assump:g} is satisfied. If $\lambda >0$, then the optimization \eqref{eqn:penalized:CPlevel} has a solution.
\end{corollary}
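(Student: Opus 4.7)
The approach is a compactness-plus-continuity argument built on Assumption~\ref{assump:g}. The plan is to take an arbitrary minimizing sequence for $f + \lambda g$, use item~2 to bound the magnitude measure $\mathcal{M}$ along that sequence, then use item~3 to replace it by a norm-bounded minimizing sequence, and finally extract a convergent subsequence whose limit attains the infimum.

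First, I would pick $\{\btheta_t\}$ with $f(\btheta_t) + \lambda g(\btheta_t) \to \inf(f + \lambda g)$. The infimum is finite because $f(\bm{0}) = \sum_i y_i^2 < \infty$ and $g(\bm{0}) < \infty$ by Assumption~\ref{assump:g}(1), so $f(\btheta_t) + \lambda g(\btheta_t) \le M$ for some $M$. Since $f \ge 0$ and $\lambda > 0$, this bounds $g(\btheta_t)$ from above by $M/\lambda$. The contrapositive of Assumption~\ref{assump:g}(2) then rules out $\mathcal{M}(\btheta_t) \to \infty$, so $\mathcal{M}(\btheta_t) \le S_1$ for some $S_1$ along (a subsequence of) the minimizing sequence.

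Next, I would apply Assumption~\ref{assump:g}(3) to obtain a companion sequence $\{\btheta_t'\}$ with $\mathcal{M}(\btheta_t') \le S_1$, $g(\btheta_t') \le g(\btheta_t)$, and $\|\btheta_t'\| \le S_2$. A key point is that the construction in item~3 can (and should) be taken to preserve the induced tensor $\sum_r \bbeta_{1,r} \circ \cdots \circ \bbeta_{D,r}$ via the scaling indeterminacy $\bbeta_{d,r} \mapsto c_{d,r}\bbeta_{d,r}$ with $\prod_d c_{d,r} = 1$, so that $f(\btheta_t') = f(\btheta_t)$. Consequently $f(\btheta_t') + \lambda g(\btheta_t') \le f(\btheta_t) + \lambda g(\btheta_t)$, and $\{\btheta_t'\}$ remains a minimizing sequence, now uniformly bounded in norm.

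Finally, Bolzano--Weierstrass yields a subsequence $\btheta_{t_k}' \to \btheta^*$; continuity of $f$ (a polynomial in the CP parameters) together with continuity of $g$ gives $f(\btheta^*) + \lambda g(\btheta^*) = \inf(f + \lambda g)$, so $\btheta^*$ attains the infimum. The main subtlety is the tensor-preserving interpretation of item~3 in the last displayed inequality for $f(\btheta_t')$; this is immediate for the canonical normalization that balances $\|\bbeta_{d,r}\|$ across modes $d$ within each component $r$, and it is precisely what one would verify directly for the concrete choices such as LASSO, group LASSO, and the ridge penalty in~\eqref{eqn:def:penalty:ridge}.
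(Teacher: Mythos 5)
Your proposal is correct and follows essentially the same route as the paper's proof: bound $g$ along a minimizing sequence to rule out $\mathcal{M}(\btheta_t)\to\infty$ via Assumption~\ref{assump:g}(2), invoke Assumption~\ref{assump:g}(3) to pass to a norm-bounded minimizing sequence, and conclude by Bolzano--Weierstrass and continuity (the paper merely packages this as a contradiction argument in two cases). You are in fact slightly more careful than the paper in flagging that item~3 must be read as producing a reparametrization that preserves the induced tensor (hence $f$), a point the paper uses implicitly.
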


As mentioned before, the low-rank approximation of higher-order tensor may not exist and the least squares estimation of tensor linear regression restricted on a low-rank space may not be attainable. 
Thus, when the degeneracy occurs, the numerical results are unstable over the iterations of any iterative algorithms and the solution of the optimization is not well-defined. 
Borrowing the idea of Corollary \ref{lem:f_lambda}, the following Theorem \ref{thm:tenreg} bypasses the rank-$R$ assumption 
and presents the asymptotic properties of the estimation \eqref{eqn:penalized:CPlevel}. 
We note that such an assumption occurs in most existing convergence analyses 
for tensor linear regression methods \citep[e.g., ][]{Zhou-Li-Zhu13,suzuki2015convergence,lock2018tensor}.
To the best of our knowledge, ours is the first result that does not require such condition, and works even when the best low-rank assumption does not exist.
Before presenting the theorem, we need some assumption on the observational errors which are intrinsic in many statistical applications.
\begin{assumption}\label{assump:error} 
	The observational errors $\epsilon_i, i=1,\ldots,n$ are i.i.d. mean zero sub-Gaussian random variables with sub-Gaussian norm $S_{\epsilon}<\infty$. 
\end{assumption}
 
Due to Example \ref{exam:CPD}, the low-rank approximation problem can be treated as a special case of tensor linear regression model without observational noise. 
Thus, using the proof of Corollary \ref{lem:f_lambda} also guarantees the existence of restricted low-rank approximation.
In particular, for given $R>0$ and $G>0$, denote the low-rank approximation error $\delta$ as 
$$
\delta^2 = \min_{\stackrel{\rank(\mathbf A) \le R,}{g(\btheta ) \le G}}\Vert \mathbf A  - \mathbf A_0 \Vert_{F}^2.
$$

\begin{thm}
	\label{thm:tenreg}
	Suppose $ \hat{\mathbf A}$ is the coefficient tensor reconstructed from a solution $\hat{\btheta}$ of \eqref{eqn:penalized:CPlevel}.
	Under Assumptions \ref{assump:predictor}--\ref{assump:error}, 
	we have
	\begin{equation}
		\label{thm:finalbound}
		\Vert \hat{\mathbf A} - \mathbf A_0 \Vert_{F}^2 \le 
		\bigOp \bigg(\frac{R^D + R\sum_{d=1}^D p_d }{n} \bigg) + \bigOp(\delta^2) +  \bigOp \bigg(\lambda \frac{G}{n} \bigg).
	\end{equation}
\end{thm}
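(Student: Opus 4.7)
The plan is to derive a standard oracle inequality that splits the estimation error into a variance (noise) part driven by the Gaussian width of the low-CP-rank set, an approximation part driven by $\delta$, and a regularization part driven by $\lambda G/n$. Let $\btheta^{*}$ be an oracle parameter such that $\mathbf A^{*} := \mathbf A(\btheta^{*})$ satisfies $\rank(\mathbf A^{*})\le R$, $g(\btheta^{*}) \le G$, and $\|\mathbf A^{*} - \mathbf A_{0}\|_{F}^{2} = \delta^{2}$; the existence of such $\btheta^{*}$ follows by applying the reasoning behind Corollary~\ref{lem:f_lambda} to the constrained low-rank approximation of $\mathbf A_{0}$. Write $\mathbf\Delta := \hat{\mathbf A} - \mathbf A^{*}$, which has CP rank at most $2R$, and $\mathbf\Delta_{0} := \mathbf A^{*} - \mathbf A_{0}$ with $\|\mathbf\Delta_{0}\|_{F}=\delta$. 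From the optimality of $\hat{\btheta}$ in \eqref{eqn:penalized:CPlevel} and nonnegativity of $g$, $f(\hat{\mathbf A}) \le f(\mathbf A^{*}) + \lambda G$. Substituting $y_{i} = \langle \mathbf A_{0}, \mathbf X_{i}\rangle + \epsilon_{i}$ and expanding the squares yields the basic inequality
\[
\sum_{i=1}^{n} \langle \mathbf\Delta, \mathbf X_{i}\rangle^{2} \;\le\; 2\sum_{i=1}^{n} \epsilon_{i}\langle \mathbf\Delta, \mathbf X_{i}\rangle \;-\; 2\sum_{i=1}^{n} \langle \mathbf\Delta, \mathbf X_{i}\rangle\langle \mathbf\Delta_{0}, \mathbf X_{i}\rangle \;+\; \lambda G.
\]

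Next I would bound each of the three right-hand side pieces. For the left-hand side, Theorem~\ref{thm:conlinar:asym} (applied to the rank-$\le 2R$ unit sphere $\mathcal{P}_{2R}$) provides a restricted eigenvalue lower bound $\frac{1}{n}\sum_{i}\langle\mathbf\Delta,\mathbf X_{i}\rangle^{2}\ge c\|\mathbf\Delta\|_{F}^{2}$ with probability at least $1-2\exp\{-Cw^{2}(\mathcal{P}_{2R})\}$, valid once $n\gtrsim w^{2}(\mathcal{P}_{2R})$. For the noise term, I would peel off the scale via
\[
\bigg|\sum_{i=1}^{n}\epsilon_{i}\langle \mathbf\Delta, \mathbf X_{i}\rangle\bigg| \;\le\; \|\mathbf\Delta\|_{F} \cdot \sup_{\mathbf A\in \mathcal{P}_{2R}}\bigg|\sum_{i=1}^{n}\epsilon_{i}\langle \mathbf A, \mathbf X_{i}\rangle\bigg|,
\]
and bound the supremum by a generic-chaining / sub-Gaussian concentration argument under Assumptions~\ref{assump:predictor}--\ref{assump:error}, producing $\bigOp(\sqrt{n}\,w(\mathcal{P}_{2R}))$. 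Theorem~\ref{thm:gaussanwidthsbound} then gives $w^{2}(\mathcal{P}_{2R}) = \bigO(R^{D}+R\sum_{d}p_{d})$. For the cross term, Cauchy--Schwarz separates $\mathbf\Delta$ from $\mathbf\Delta_{0}$, and a Hanson--Wright / sub-Gaussian concentration on the fixed tensor $\mathbf\Delta_{0}$ yields $\sum_{i}\langle\mathbf\Delta_{0},\mathbf X_{i}\rangle^{2}=\bigOp(n\delta^{2})$.

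Plugging the three estimates back into the basic inequality and applying AM--GM to absorb the factor $\sqrt{\sum_{i}\langle \mathbf\Delta, \mathbf X_{i}\rangle^{2}}$ (which reappears from the cross-term Cauchy--Schwarz) into the left-hand side, I would reduce the argument to a quadratic inequality
\[
c\,n\|\mathbf\Delta\|_{F}^{2} \;\le\; C\|\mathbf\Delta\|_{F}\sqrt{n\,w^{2}(\mathcal{P}_{2R})} \;+\; Cn\delta^{2} \;+\; \lambda G,
\]
solve it to get $\|\mathbf\Delta\|_{F}^{2} = \bigOp\!\big(\delta^{2} + (R^{D}+R\sum_{d}p_{d})/n + \lambda G/n\big)$, and conclude with the triangle inequality $\|\hat{\mathbf A}-\mathbf A_{0}\|_{F}^{2} \le 2\|\mathbf\Delta\|_{F}^{2} + 2\delta^{2}$ to obtain \eqref{thm:finalbound}.

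The step I expect to be hardest is the uniform noise bound: because both $\epsilon_{i}$ and $\langle \mathbf A, \mathbf X_{i}\rangle$ are sub-Gaussian, the summands $\epsilon_{i}\langle \mathbf A, \mathbf X_{i}\rangle$ are only sub-exponential, so the chaining argument must use a Bernstein-type tail and carefully control the metric entropy of $\mathcal{P}_{2R}$ via its Gaussian width (rather than the naive ambient-dimension $\epsilon$-net), so that the final dependence is $R^{D}+R\sum_{d}p_{d}$ and not $\prod_{d}p_{d}$. Once this non-trivial concentration ingredient is in place, everything else is standard oracle-inequality algebra combined with the tools already developed in Theorems~\ref{thm:conlinar:asym}--\ref{thm:gaussanwidthsbound}.
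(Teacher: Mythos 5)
Your proposal is correct and follows essentially the same route as the paper's proof: an oracle comparison with the constrained best approximant $\mathbf A_m$, the basic inequality from optimality of $\hat{\btheta}$, a Dudley/Gaussian-width bound on the noise term over the low-rank set, a sub-exponential concentration bound on the cross term involving $\mathbf r_{0m}=\vec(\mathbf A_0)-\vec(\mathbf A_m)$, restricted-eigenvalue control of the design, and finally solving the quadratic inequality and applying the triangle inequality. Your observation that the difference $\hat{\mathbf A}-\mathbf A^{*}$ has CP rank at most $2R$, so the restricted eigenvalue and entropy bounds should be applied to $\mathcal{P}_{2R}$, is in fact slightly more careful than the paper's own write-up, which applies them to $\mathcal{P}$; this changes the bound only by constants.
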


\supp{The proof of Theorem \ref{thm:tenreg} is given in Section \ref{proof:thm:tenreg}}. The first term in the right hand side (RHS) of \eqref{thm:finalbound} is due to the estimation error. In many real applications, $D=3,4$, and $R$ is very small, e.g., $R=3$ \citep{Zhou-Li-Zhu13}. Thus, $(R^D + R\sum_{d=1}^D p_d)/n $ is very close to $R\sum_{d=1}^D p_d /n$ in terms of order, which is the minimax 
lower bound shown in \cite{suzuki2015convergence}. The second term in the RHS of \eqref{thm:finalbound} is due to the approximation error. 
Note that we do not make any assumptions on the relationship between $R$ and the true rank $R_0$. That is, this theorem allows $R_0 -R$ to be arbitrarily large. If $R \ge R_0$ and $G = g(\btheta_0)$, where $\btheta_0$ is CP parameters of $\mathbf{A}_0$, we have $\delta^2=0$. 
The third term in the RHS of \eqref{thm:finalbound} is the bias due to penalization. Using a $\lambda$ satisfying $\lambda \le C\max\{(R^D + R\sum_{d=1}^D p_d)/G, n \delta^2/G\}$, the bias term is dominated by the first two terms in the RHS.

In addition to the tensor $\hat{\mathbf A}$, we are also interested in the asymptotic behavior of the corresponding CP parameters.
For a given set of data, Assumption \ref{assump:g} can be used to show $\hat{\btheta}$ is bounded whenever $\lambda>0$.
However, in the asymptotic framework, we are considering a sequence of data sets and $\lambda$ as $n\rightarrow\infty$. Therefore, the result is not straightforwardly followed from Corollary \ref{lem:f_lambda}.
Since a generic penalty function $g$ is considered in this article, we show the rate of convergence of $\hat{G}$ to obtain a general result, where $\hat{G}:=g(\hat{\btheta}) $ for an attainable $\hat{\btheta}$. 
\begin{corollary}\label{cor:hat_G}
	Using the same assumptions of Theorem \ref{thm:tenreg} and taking $$\lambda =  \max\{(R^D + R\sum_{d=1}^D p_d)/G, n \delta^2/G\}, $$ 
	we have 
	\[
	\hat{G} = \bigOp (
	G
	).
	\]
\end{corollary}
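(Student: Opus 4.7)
The plan is to exploit the optimality of $\hat{\btheta}$ for the penalized objective \eqref{eqn:penalized:CPlevel}. Let $\btheta^*$ attain the constrained low-rank approximation, so that the tensor $\mathbf A^*$ reconstructed from $\btheta^*$ satisfies $g(\btheta^*)\le G$, $\rank(\mathbf A^*)\le R$, and $\|\mathbf A^* - \mathbf A_0\|_F^2 = \delta^2$; such a $\btheta^*$ exists by the same continuity/coercivity argument used in Corollary~\ref{lem:f_lambda}. Since $\hat{\btheta}$ minimizes $f+\lambda g$, we have $f(\hat{\btheta}) + \lambda\hat{G} \le f(\btheta^*) + \lambda G$, which rearranges to
\[
\lambda(\hat{G} - G) \le f(\btheta^*) - f(\hat{\btheta}).
\]
The whole problem thus reduces to showing $f(\btheta^*) - f(\hat{\btheta}) = \bigOp(\lambda G)$, since dividing by $\lambda$ then yields $\hat{G} \le G + \bigOp(G) = \bigOp(G)$.

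Set $\Delta^* = \mathbf A^* - \mathbf A_0$ and $\hat\Delta = \hat{\mathbf A} - \mathbf A_0$. Substituting $y_i = \langle \mathbf A_0, \mathbf X_i\rangle + \epsilon_i$ into \eqref{eqn:def:F} and expanding,
\[
f(\btheta^*) - f(\hat{\btheta})
= \sum_{i=1}^n \langle \Delta^*, \mathbf X_i\rangle^2
- \sum_{i=1}^n \langle \hat\Delta, \mathbf X_i\rangle^2
+ 2\sum_{i=1}^n \epsilon_i \langle \hat\Delta - \Delta^*, \mathbf X_i\rangle.
\]
Dropping the nonpositive middle term, I would bound the first sum by $\bigOp(n\|\Delta^*\|_F^2)=\bigOp(n\delta^2)\le \bigOp(\lambda G)$ via standard sample-second-moment concentration under Assumption~\ref{assump:predictor}. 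For the noise term, the key observation is that $\hat\Delta - \Delta^* = \hat{\mathbf A} - \mathbf A^*$ is a difference of two tensors each of rank $\le R$, hence has rank $\le 2R$. Recycling the empirical-process/Gaussian-width argument behind Theorem~\ref{thm:tenreg}---with $R$ replaced by $2R$ in Theorem~\ref{thm:gaussanwidthsbound}, which only inflates the bound by a constant depending on $D$---yields the uniform estimate
\[
\Bigl|\sum_{i=1}^n \epsilon_i \langle \mathbf B, \mathbf X_i\rangle\Bigr|
= \bigOp\!\left(\sqrt{n\bigl(R^D + R\sum_{d=1}^D p_d\bigr)}\,\|\mathbf B\|_F\right)
\quad \text{uniformly over } \rank(\mathbf B)\le 2R.
\]

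To close the noise bound, I would invoke Theorem~\ref{thm:tenreg}: under our choice of $\lambda$, its $\bigOp(\lambda G/n)$ term is dominated by the first two, so $\|\hat\Delta\|_F^2 = \bigOp((R^D + R\sum_{d=1}^D p_d)/n + \delta^2)$; combined with $\|\Delta^*\|_F = \delta$ and the triangle inequality,
\[
\|\hat\Delta - \Delta^*\|_F = \bigOp\!\left(\sqrt{(R^D + R\sum_{d=1}^D p_d)/n} + \delta\right).
\]
Plugging this in and using $2\sqrt{ab}\le a+b$ on the resulting cross term absorbs it into $R^D + R\sum_{d=1}^D p_d + n\delta^2 = \bigOp(\lambda G)$. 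Combining with the first-sum bound gives $f(\btheta^*) - f(\hat{\btheta}) = \bigOp(\lambda G)$, closing the argument. The step requiring the most care is the uniform noise control: one must ensure that the process bound holds uniformly over the random rank-$2R$ element $\hat\Delta - \Delta^*$, which is standard once the machinery of Theorem~\ref{thm:tenreg} is in hand but does require reusing that machinery rather than merely citing the final inequality of Theorem~\ref{thm:tenreg}.
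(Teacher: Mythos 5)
Your proof is correct and follows essentially the same route as the paper: both start from the basic inequality $f(\hat{\btheta})+\lambda\hat{G}\le f(\btheta^{*})+\lambda G$ and recycle the empirical-process bounds from the proof of Theorem \ref{thm:tenreg} to arrive at $\lambda\hat{G}\le C\bigl(R^{D}+R\sum_{d=1}^{D}p_{d}+n\delta^{2}\bigr)+C\lambda G=\bigOp(\lambda G)$. The paper gets there slightly more economically by substituting $G-\hat{G}$ for $G$ in the already-derived oracle inequality and using nonnegativity of $\Vert\hat{\mathbf A}-\mathbf A_{0}\Vert_{F}^{2}$, whereas you re-expand the loss difference and additionally invoke the conclusion of Theorem \ref{thm:tenreg} to control $\Vert\hat{\mathbf A}-\mathbf A^{*}\Vert_{F}$, but the ingredients and the final bound are the same.
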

\supp{The proof of Corollary \ref{cor:hat_G} is in Section \ref{proof:cor:hat_G}.} 

\subsection{Not every penalty works}\label{sec:ridge}

Assumption \ref{assump:g} gives a sufficient condition of the penalty function that prevents the CP degeneracy. The crux is to control the magnitude of the estimator. In this subsection, we will give an example that the choice of parametrization of which the magnitude is being controlled is crucial.
More specifically, we will focus on the original tensor parameterization $\mathbf{A}$ instead of the CP parametrization $\btheta$.
A natural way to control its magnitude is to impose the element-wise ridge penalty ($l_2$) on the regression coefficient tensor $\mathbf{A}$
\citep[e.g.,][]{guo2011tensor}. 
In other words, consider
\begin{equation}\label{eq:opt_ridge}
	\argmin_{\mathrm{rank}(\mathbf A) \le R}  \sum_{i=1}^n ( y_i -  \langle \mathbf A, \mathbf X_i \rangle   )^2+\alpha \Vert \mathbf A \Vert_{F}^2.
\end{equation}

\begin{proposition}\label{lem:ridge}
	The optimization \eqref{eq:opt_ridge} is not guaranteed to have a solution. 
\end{proposition}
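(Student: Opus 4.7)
The plan is to produce a concrete counterexample, reusing the noiseless design of Example \ref{exam:CPD} together with the canonical border-rank-2, rank-3 tensor already introduced after \eqref{eqn:def:A_alpha}. Specifically I would set $D=3$, take $\{\mathbf{X}_i\}$ to be the full standard basis of $\mathbb{R}^{p_1\times p_2\times p_3}$ (so $n=\prod_d p_d$, and $\mathbf{Z}=\mathbf{I}$), take $\epsilon_i=0$ with true coefficient $\mathbf{A}_0=\mathbf{G}=\mathbf{v}_1\circ\mathbf{w}_2\circ\mathbf{w}_3+\mathbf{w}_1\circ\mathbf{v}_2\circ\mathbf{w}_3+\mathbf{w}_1\circ\mathbf{w}_2\circ\mathbf{v}_3$ with each $\mathbf{v}_d,\mathbf{w}_d$ linearly independent, and set $R=2$. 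The goal is to show that for this instance the infimum in \eqref{eq:opt_ridge} is not attained for any $\alpha>0$.

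The first key step is to reduce the objective to a tensor approximation form. Under the basis design, the squared loss equals $\|\mathbf{A}_0-\mathbf{A}\|_F^2$, so the ridge-penalized objective becomes
\[
h(\mathbf{A}) \;:=\; \|\mathbf{G}-\mathbf{A}\|_F^2+\alpha\|\mathbf{A}\|_F^2 \;=\; \|\mathbf{G}\|_F^2 - 2\langle \mathbf{G},\mathbf{A}\rangle + (1+\alpha)\|\mathbf{A}\|_F^2.
\]
This function, viewed without any rank constraint, is strictly convex in $\mathbf{A}$ (because $\alpha>0$), so it has a unique unconstrained minimizer
\[
\mathbf{A}^* \;=\; \frac{1}{1+\alpha}\,\mathbf{G}, \qquad h(\mathbf{A}^*)=\frac{\alpha}{1+\alpha}\|\mathbf{G}\|_F^2.
\]
Since $\mathbf{A}^*$ is just a nonzero rescaling of $\mathbf{G}$, it has the same rank as $\mathbf{G}$, namely $3$.

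The second step is to show that the rank-$\le 2$ infimum of $h$ coincides with $h(\mathbf{A}^*)$. I would use the approximating sequence $\mathbf{G}_\gamma$ from \eqref{eqn:def:A_alpha}, which is rank-$\le 2$ and satisfies $\mathbf{G}_\gamma\to \mathbf{G}$ in Frobenius norm. Setting $\mathbf{A}_\gamma := \mathbf{G}_\gamma/(1+\alpha)$ gives a sequence of rank-$\le 2$ tensors with $\mathbf{A}_\gamma\to \mathbf{A}^*$, and by continuity of $h$ on $\mathbb{R}^{p_1\times p_2\times p_3}$ we get $h(\mathbf{A}_\gamma)\to h(\mathbf{A}^*)$. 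Combined with the trivial bound $\inf_{\mathrm{rank}(\mathbf{A})\le 2} h(\mathbf{A})\ge h(\mathbf{A}^*)$, this yields $\inf_{\mathrm{rank}(\mathbf{A})\le 2} h(\mathbf{A})=h(\mathbf{A}^*)$.

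The final step is a contradiction argument: if some rank-$\le 2$ tensor $\widetilde{\mathbf{A}}$ attained this infimum, it would be a global minimizer of the strictly convex functional $h$ over the whole space $\mathbb{R}^{p_1\times p_2\times p_3}$; by uniqueness we must have $\widetilde{\mathbf{A}}=\mathbf{A}^*$, but $\mathrm{rank}(\mathbf{A}^*)=3>2$, a contradiction. Hence \eqref{eq:opt_ridge} admits no minimizer in this example, proving Proposition \ref{lem:ridge}. The only delicate piece is verifying that the unique unconstrained minimizer inherits the rank-3 structure; this is immediate here because $\mathbf{A}^*=\mathbf{G}/(1+\alpha)$ is a nonzero scalar multiple of $\mathbf{G}$, and the border-rank-versus-rank gap for $\mathbf{G}$ itself is already established in the paper.
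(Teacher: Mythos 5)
Your proof is correct, and it establishes the proposition by essentially the same counterexample as the paper: the ridge penalty on the coefficient tensor $\mathbf{A}$ merely rescales the unconstrained minimizer to $\mathbf{G}/(1+\alpha)$, which is still a rank-$3$, border-rank-$2$ tensor, so the rank-$\le 2$ constrained infimum cannot be attained. The packaging differs slightly. The paper (via Example \ref{exam:ridge}) rewrites \eqref{eq:opt_ridge} as an augmented least-squares problem with design $\mathbf{Z}_\alpha=(\mathbf{Z}^\tp,\sqrt{\alpha}\,\mathbf{I})^\tp$, identifies the solution set $\mathcal{S}$ as the singleton $\{\mathbf{G}_b/(1+\alpha)\}$, and then invokes Lemma \ref{lem:divergingset} with $R_{\rm b}=2<3=R_{\rm m}$. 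You instead specialize to the standard-basis design of Example \ref{exam:CPD} (so $\mathbf{Z}=\mathbf{I}$), and close the argument with strict convexity of $h(\mathbf{A})=\|\mathbf{G}-\mathbf{A}\|_F^2+\alpha\|\mathbf{A}\|_F^2$: the approximating sequence $\mathbf{G}_\gamma/(1+\alpha)$ shows the rank-constrained infimum equals the global minimum $h(\mathbf{A}^*)$, and uniqueness of the global minimizer forces any attaining point to equal $\mathbf{A}^*=\mathbf{G}/(1+\alpha)$, which has rank $3$. Your route is more self-contained (it bypasses Lemma \ref{lem:divergingset} and the border-rank bookkeeping entirely, needing only the sequence \eqref{eqn:def:A_alpha} and $\rank(\mathbf{G})=3$), while the paper's route slots the example into the general machinery of Lemma \ref{lem:divergingset}, which also covers designs where $\mathbf{Z}^\tp\mathbf{Z}$ is not invertible and the solution set is not a singleton. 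Both are valid proofs of the proposition, which only requires a single counterexample.
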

The following Example \ref{exam:ridge} serves as a proof of Proposition \ref{lem:ridge}.
We will also compare the estimators using \eqref{eqn:def:f} and \eqref{eqn:penalized:CPlevel} in Section \ref{sec:simulation} of the simulation study.
\begin{example}\label{exam:ridge}
	If $\mathbf Z$ and $\mathbf{y}$ satisfy the conditions of Example \ref{exam:1}, then there does not exist any solution when one uses \eqref{eq:opt_ridge} with $R=2$ to estimate the low-rank coefficient tensor in tensor linear model \eqref{eqn:TLR_model_A}. 
	\supp{The proof of this result is given in Section \ref{proof_all_exam}.}
\end{example}

If the optimization \eqref{eq:opt_ridge} cannot attain the infimum, similar to Theorem \ref{thm:diverging}, the magnitude of CP parameters may diverge. This phenomenon is confirmed by our simulation study (Table \ref{table:diveringcase}).

In practice, most algorithms impose an upper bound on the number of iteration, which prevents various numerical issues due to CP degeneracy. The resulting estimators could still perform well in practice. However, the effect of such early stopping strategy requires more understanding.

\section{Numerical experiments}\label{sec:simulation}

To further understand the CP degeneracy in tensor linear regression,
we performed numerical experiments via synthetic data generated from a variety of settings. 
According to our discussion in Section \ref{sec:theory},
we compared three methods: tensor linear regression without penalization, that with $l_2$ penalty on the CP parameters $\btheta$, and that with $l_2$ penalty on the coefficient tensor $\mathbf{A}$.
All of the corresponding optimization problems were solved by a block updating algorithm \citep{guo2011tensor, Zhou-Li-Zhu13,lock2018tensor} and the details are stated in Algorithm \ref{algo} with the following notations.
Recall the definition of $f(\btheta)$ in \eqref{eqn:def:f0} and the notation $\mathbf B_d$ in \eqref{eqn:def_theta},
\[
f(\btheta) = \sum_{i=1}^n \bigg( y_i-   \bigg \langle  \sum_{r=1}^{R} \bbeta_{1,r} \circ \cdots \circ \bbeta_{D, r}, \mathbf X_i  \bigg \rangle  \bigg)^2 \quad \text{and} \quad  \mathbf B_d = (\bbeta_{d,1}, \cdots, \bbeta_{d,R}).
\]
For notational simplicity, when the tuning parameters $\lambda$ and $\alpha$ are given, we respectively rewrite the least squares method \eqref{eqn:def:f}, CP parameters penalized method \eqref{eqn:penalized:CPlevel} with $g$ specified as the ridge penalty \eqref{eqn:def:penalty:ridge}, and the coefficient tensor penalized method in \eqref{eq:opt_ridge} as 
\[
L_1 (\mathbf B_1,\ldots, \mathbf B_D):=f(\btheta),
\]
\[
L_2 (\mathbf B_1,\ldots, \mathbf B_D) := f(\btheta) + \lambda \sum_{d=1}^D \Vert \mathbf B_d \Vert_F^2,
\]
and
\[
L_3 (\mathbf B_1,\ldots, \mathbf B_D) := f(\btheta)  + \alpha \left\Vert \sum_{r=1}^{R} \bbeta_{1,r} \circ \cdots \circ \bbeta_{D, r}  \right\Vert^2.
\]
We randomly generated the initial value $\mathbf B_d^{0}$ and then ran Algorithm \ref{algo} to get an estimate from each method. Since all of the optimizations are non-convex, we used 5 random initial values and chose the one that minimized the corresponding objective value for each method. 
\begin{algorithm}[h!]
	\caption{Block Updating Algorithm}
	\label{algo}
	\text{Initialize: } $i$, $T$, $\mathbf B_d^{0}$, $d=1,\ldots,D$. \\
	\textbf{repeat} \\
	\hspace*{0.1in} \textbf{for} $d=1,\ldots,D$ \textbf{do} \\
	\hspace*{0.25in}$	\mathbf B_d^{t+1} = \argmin_{\mathbf B_d} L_i(\mathbf B_1^{t+1},\ldots, \mathbf B_{d-1}^{t+1}, \mathbf B_d, \mathbf B_{d+1}^t \ldots, \mathbf B_D^t  )$\\
	\hspace*{0.1in} \textbf{end for} \\
	\textbf{until} $t=T$

\end{algorithm}

In this study, each simulated dataset $\{(y_i, \mathbf X_i), i=1,\ldots, n \}$ was generated from one of the following four cases including both noiseless and noisy ones:


\begin{center}
	\begin{itemize}
		\item [Case 1a:] 	\begin{center}	
			$y = \langle \mathbf A_0^a, \mathbf X \rangle $;
		\end{center}
		\item [Case 1b:]
		\begin{center}
			$y = \langle \mathbf A_0^b, \mathbf X \rangle $;
		\end{center}
		\item [Case 2a:]
		\begin{center}	
			$y = \langle \mathbf A_0^a, \mathbf X \rangle  + \epsilon$;
		\end{center}
		\item [Case 2b:]  
		\begin{center}
			$y = \langle \mathbf A_0^b, \mathbf X \rangle + \epsilon$.
		\end{center}
	\end{itemize}
\end{center}
In the above, $\mathbf X \in \R^{p_0 \times p_0 \times p_0}$ was randomly generated with
independent $Unif(0,1)$ entries, and the error $\epsilon$ was an independent $\mathcal{N}(0, \sigma^2)$ random variable.
The variance $\sigma^2$ was chosen such that
the signal-to-noise ratio $\text{SNR} = \text{Var}\{m(\mathbf X)\}\big/ \sigma^2=4$, where $m$ is the regression function (i.e., $\langle \mathbf A_0^a, \mathbf X \rangle$ or $\langle \mathbf A_0^b, \mathbf X \rangle$). 
We set $R_0 =3$ and the true coefficient tensor $\mathbf A_0^a$ and $\mathbf A_0^b$ were constructed in the following ways:
\[
\mathbf A_0^a = \mathbf w_1 \circ \mathbf v_2 \circ \mathbf v_3 + \mathbf v_1 \circ \mathbf w_2 \circ \mathbf v_3+ \mathbf v_1 \circ \mathbf v_2 \circ \mathbf w_3
\quad \mbox{and} \quad
\mathbf A_0^b = \mathbf u_1 \circ \mathbf u_2 \circ \mathbf u_3 + \mathbf v_1 \circ \mathbf v_2 \circ \mathbf v_3+ \mathbf w_1 \circ \mathbf w_2 \circ \mathbf w_3,
\]
where $\mathbf w_d$, $\mathbf u_d$, $\mathbf v_d \in \R^{p_0}$ with $p_0=5$, and their entries were independently drawn from $Unif(-5,5)$, $d=1,2,3$, for every simulated dataset.  
It can be seen from the data generating processes that $\mathbf w_d$, $\mathbf u_d$, $\mathbf v_d$ are linear independent, for $d=1,2,3$, with probability 1. 
Although both $\mathbf A_0^a$ and $\mathbf A_0^b$ have CP rank $3$, they behave differently in the sense of border rank. The border rank of $\mathbf A_0^a$ is strictly less than its CP rank, which is called a degenerate tensor \citep{de2008tensor}. 
We considered two different sample sizes $n=100$ and $200$ for each case.
In the experiments, Algorithm \ref{algo} was applied to the simulated datasets with $R = 2$ and $3$ for each of these three methods. We varied the tuning parameters $\lambda$ and $\alpha$, with values chosen from $\{0.001, 0.01, 0.1\}$. For each setting, the experiments were replicated 50 times.

We investigated the CP degeneracy in the experiments by checking whether the magnitude of the CP parameters diverges as the iteration number goes to infinity.
We note that this is not a perfect rule, since the underlying optimizations are non-convex and the iteration sequences may not converge to a global optimum even with multiple initializations.
However, we will see in later results that our empirical rule of determining the CP degeneracy is fairly accurate and does not affect the overall conclusion of our experiments.

Next we construct this empirical rule.
In practice, we only have access to a finite number of iterations due to the limitation of computational resources.
In our experiments, we let the algorithm run for $100000$ iterations.
In order to numerically identify CP degeneracy, we calculated $\mathcal{M}(\btheta_t)$ to measure the magnitude of CP parameters (see also Theorem \ref{thm:diverging}), and therefore we have
\begin{equation}\label{eqn:sim:scale_iteration_curve}
	\{t, \mathcal{M}(\btheta_t)\}, t=1,\ldots, 100000.
\end{equation} 
Two examples for Case 1a with $(n, p, R, R_0)=(200,5,2,3)$  and Case 2a with $(n, p, R, R_0)=(100,5,3,3)$ are depicted in the first row of Figure \ref{fig:2}. 
Determining whether the underlying iteration sequence of \eqref{eqn:sim:scale_iteration_curve} is divergent can be regarded as a classification problem. 
We propose a simple classification rule to conservatively identify the divergent curves.
First, as shown in the Figure \ref{fig:2}, the initial iterations are usually of different patterns and clearly not crucial in determining the divergence. So we focus on $t\ge 50000$. 
If $\mathcal{M}(\btheta_{100000}) \le \mathcal{M}(\btheta_{50000})$, the curve will be simply classified as non-divergent.
The difficult situations are that $\mathcal{M}(\btheta_{100000}) > \mathcal{M}(\btheta_{50000})$, where our rule is based on the estimation of the gradient of a continuous surrogate.
Consider a differentiable function defined on $[50000,\infty)$ that passes through the points $\{(t, \mathcal{M}(\btheta_t))\}_{t=50000}^{100000}$, with its gradient approximated by an exponential function 
\begin{equation}\label{eqn:def:habc}
	h_{a,b,c}(t) = at^b + c,
\end{equation}
where $a$, $b$ and $c$ are parameters crucial for determining the divergence.
These parameters are estimated by the applications of least squares estimation to the numerical proxies of the gradient:
\begin{equation}\label{eqn:sim:cutoff}
	\left\{\left(t, \frac{\mathcal{M}(\btheta_{t+100}) - \mathcal{M}(\btheta_{t})}{100}\right): t=50000, 50100\ldots, 99900\right\}. 
\end{equation}
Theoretically, when the gradient function has the form of \eqref{eqn:def:habc}, one can use the values of $a$, $b$, and $c$ to simply determine whether $\int_{50000}^{\infty} h_{{a}, {b},{c}}(t) \rm d t = \infty$ or not.
It is known that either when $\{a>0,c>0\}$ or $\{a>0, c = 0, b>-1\}$, the integral will be infinity.
Let $\hat{a}$, $\hat{b}$, and $\hat{c}$ be the fitted values of $a$, $b$, and $c$, respectively. Due to our discussion, we use the following rule
\[
\hat{a} > 0, \hat{c} > \gamma_c
\quad \text{or} \quad	\hat{a} > 0, \hat{b} \ge \gamma_b, \eta_c  \le\hat{c} \le \gamma_c
\]
to quantitatively identify whether the magnitude curve $\mathcal{M}(\btheta_t)$ is divergent to infinity, 
where $\gamma_c$, $\eta_c >0$ and $\gamma_b > -1$ are the pre-chosen cutoffs. 
One example of such fitting is depicted in the bottom panels of Figure \ref{fig:1}. 
Since there exist estimation and approximation errors, due to fitting an exponential model, we set $(\gamma_b, \eta_c, \gamma_c)=(-0.5, 0, 0.0015)$ to obtain a conservative classification rule of divergence. 

\begin{figure}[!h]
	\centering
	\includegraphics[
	width=0.45\textwidth]{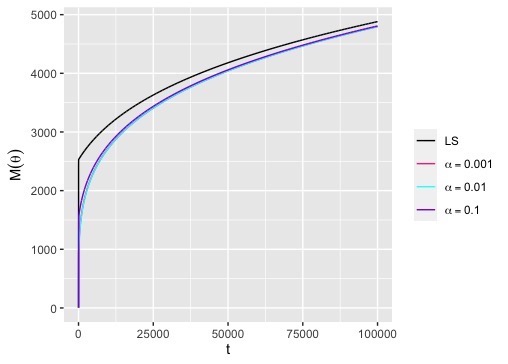}
	\includegraphics[width=0.45\textwidth]{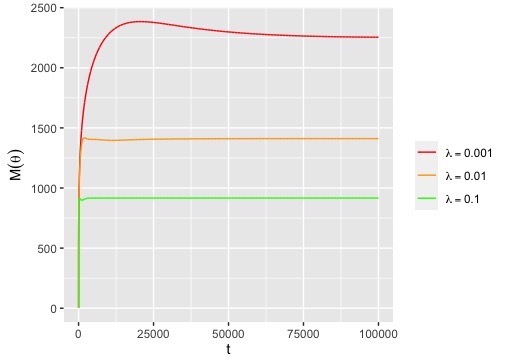}
	\includegraphics[width=0.45\textwidth]{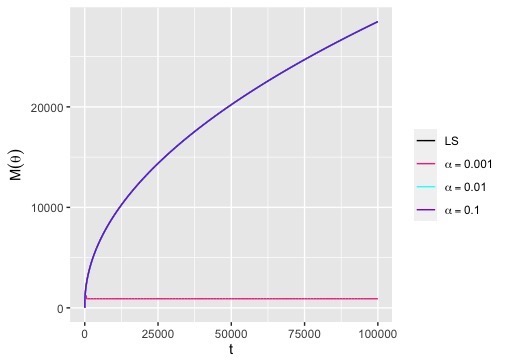}
	\includegraphics[width=0.45\textwidth]{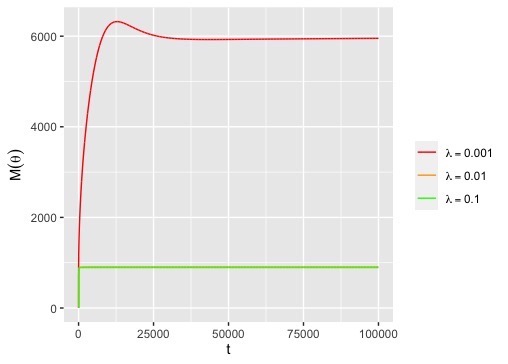}
	
	\caption{Examples of numerical experiments. The figures are the magnitude $\mathcal{M}(\btheta_t)$ versus the iteration number $t$. The top corresponds to Case 1a with $(n, p, R, R_0)=(200,5,2,3)$, and the bottom corresponds to Case 2a with $(n, p, R, R_0)=(100,5,3,3)$. 
		The results with respect to \eqref{eqn:def:f} and \eqref{eq:opt_ridge} with $\alpha = 0.001, 0.01, 0.1$ are grouped together in the first column using different colors. The results with respect to \eqref{eqn:penalized:CPlevel} with $\lambda = 0.001, 0.01, 0.1$ are depicted in the second column using different colors. \label{fig:2}.}
\end{figure}

\begin{figure}[!h]
	\centering
	\includegraphics[
	width=0.45\textwidth]{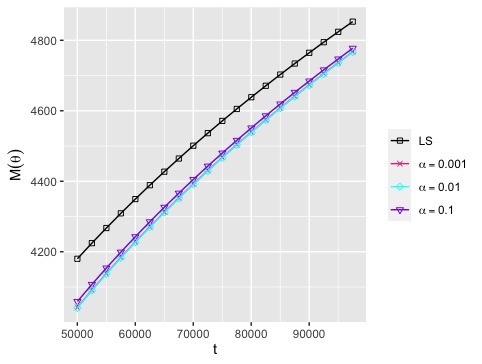}
	\includegraphics[width=0.45\textwidth]{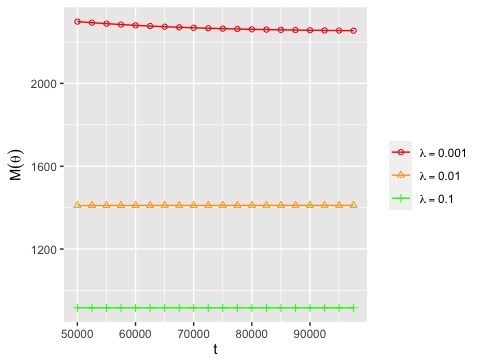}
	\includegraphics[
	width=0.45\textwidth]{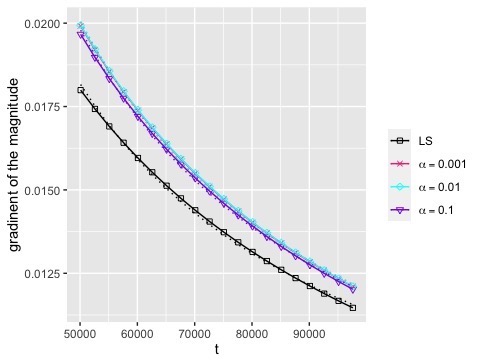}
	\includegraphics[
	width=0.45\textwidth]{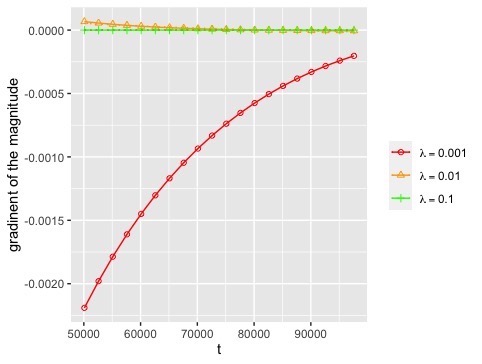}
	\caption{An example of a numerical experiment for Case 1a with $(n, p, R, R_0)=(200,5,2,3)$. The top are the plots of the magnitude $\mathcal{M}(\btheta_t)$ versus the iteration number $t$ for $t\ge 50000$. The bottom are plots of $\{(\mathcal{M}(\btheta_{t+100}) - \mathcal{M}(\btheta_{t}))/100\}$
		and the fitted values $h_{\hat{a}, \hat{b},\hat{c}}(t)$ versus $t$. 
		The results with respect to \eqref{eqn:def:f} and \eqref{eq:opt_ridge} with $\alpha = 0.001, 0.01, 0.1$ are grouped together and presented in the first column using different colors and point shapes. The results with respect to \eqref{eqn:penalized:CPlevel} with $\lambda = 0.001, 0.01, 0.1$ are depicted in the second column using different colors and point shapes. The dotted lines with the same color in the bottom are the corresponding fitted curves using \eqref{eqn:def:habc}. 	\label{fig:1}}	
\end{figure}

The results of the identified number of diverging curves using the aforementioned rule are summarized in Table \ref{table:diveringcase}. 
For the noiseless cases (Cases 1a and 1b), when the coefficient tensor is degenerate and a misspecified rank is used in optimization ($R=2$ in Case 1a), the methods \eqref{eqn:def:f} and \eqref{eq:opt_ridge} have identified 
50 divergences out of 50 replications. 
This result is consistent with our discussions in Subsection \ref{ssec:existDegn}.  
For CP parameters penalized method \eqref{eqn:penalized:CPlevel} with $g$ specified as the ridge penalty \eqref{eqn:def:penalty:ridge}, 
there is only one case that is identified as divergence for small $\lambda$ ($\lambda = 0.001$), which confirms that CP penalization is able to overcome the problem of degeneracy as in Subsection \ref{ssec:strategy}.
For the noisy situations, i.e., Cases 2a and 2b, 
though CP parameters penalized method \eqref{eqn:penalized:CPlevel} has some rare cases to be identified as divergence, it is significantly less than the numbers of divergent cases of the other two methods. 
Again, we note that the empirical rule of divergence identification is not perfect,
and so the very small number of identified divergences does not necessarily contradict with our theory in Section \ref{ssec:strategy}.
Figure \ref{fig:1} shows an example that the least squares and the coefficient penalized methods provide divergent iteration sequences, while the results of CP parameters penalized methods are identified as convergence.
Overall, the simulation study demonstrates that the divergence in tensor regression can occur in both noiseless and noised cases, if one directly uses the least squares method or the coefficient penalized method to estimate the unknown coefficient tensor. One way to overcome this challenge is to use CP parameters penalized method as we discussed in Subsection \ref{ssec:strategy} and confirmed in our numerical experiments.

\begin{table}[h!]
	\caption{The number of curves that are identified as diverging cases. Reported are based on 50 data replications for each setting. The columns $LS$, $\lambda$'s, and $\alpha$'s correspond to the least squares method \eqref{eqn:def:f}, CP parameters penalized method \eqref{eqn:penalized:CPlevel} with $g$ specified as the ridge penalty \eqref{eqn:def:penalty:ridge}, and the coefficient penalized method \eqref{eq:opt_ridge}, respectively.
	\label{table:diveringcase} }
\resizebox{\textwidth}{!}{
	\centering
	\fbox{
		\begin{tabular*}{18cm}{ccc|cccccccl}  
			\multirow{2}*{}&  \multirow{2}*{$(n, p_0)$ }&  \multirow{2}*{$(R, R_0)$ }& \multirow{2}*{$LS$ }
			& \multirow{2}*{$\lambda=0.001$ }&
			\multirow{2}*{$\lambda=0.01$ }   & \multirow{2}*{$\lambda=0.1$ } & \multirow{2}*{$\alpha=0.001$ } & \multirow{2}*{$\alpha=0.01$ } & \multirow{2}*{$\alpha=0.1$ }\\ 
			\\
			\hline 
			\multirow{4}*{Case 1a}& (100, 5)
			& (2, 3)  & 50 &  0  &  0 & 0& 50  &  50  &  50 & \\
			& (100, 5) & (3, 3)
			& 0 &  0  &  0 & 0& 0  &  0  &  0  & 	\\
			& (200, 5) & (2, 3)
			& 50 &  1  &  0 & 0& 50  &  50  &  50  & 	\\
			&(200, 5)& (3, 3)&0 & 0  & 0 & 0 &0 &0& 0&\\
			\hline
			\multirow{4}*{Case 1b}
			
			& (100, 5) & (2, 3)
			& 0 &  0  &  0 & 0& 0  &  0  &  0  & 	\\
			& (100, 5) & (3, 3) 	& 0 &  0  &  0 & 0& 0  &  0  &  0  &  \\
			&(200, 5)& (2, 3)&0 &  0 & 0 & 0 &0 &0& 0&\\
			&(200, 5)& (3, 3)& 0 &  0 & 0 & 0 &0 &0& 0&\\
			\hline 
			\multirow{4}*{Case 2a}
			& (100, 5) & (2, 3)
			& 20 &  1  &  0 & 0& 20  &  20  &  20  & 	\\
			& (100, 5) & (3, 3)&25 & 1 & 1 & 0& 20 &18 &22 \\
			
			& (200, 5) & (2, 3)
			& 21 &  1  &  1 & 0& 21  &  21  &  21  & 	\\
			&(200, 5)& (3, 3)& 25 & 1 & 1 & 0 &25 &25& 27&\\
			\hline
			\multirow{4}*{Case 2b}
			& (100, 5) & (2, 3)
			& 3 &  0  &  0 & 0& 1  &  2  &  1  & 	\\
			& (100, 5) & (3, 3)
			& 8 &  0  &  0 & 0& 9  &  6  &  9  & \\
			&(200, 5)& (2, 3)&2 &0  & 0 & 0 &2 &2& 2&\\
			&(200, 5)& (3, 3)&1 &  0& 0 & 0 &2 &1& 2&\\
	\end{tabular*}  
}
}	
\end{table}

%
%
%

\appendix
\setcounter{equation}{0}

\renewcommand{\theequation}{S.\arabic{equation}} 
\renewcommand{\thesection}{S.\arabic{section}}
\renewcommand{\thesubsection}{S.\arabic{section}.\arabic{subsection}}
\renewcommand{\thelemma}{{\bf S.\arabic{lemma}}} 
\renewcommand{\thetable}{S.\arabic{table}}
\renewcommand{\thefigure}{S.\arabic{figure}}
\section{Proof details}
\label{supp}
\subsection{Proof of the results of Examples}\label{proof_all_exam}

\begin{proof}[Proof of the result of Example \ref{exam:2}]
	To save notations, for a tensor $\mathbf G \in \mathbb{R}^{p_1\times p_2 \times p_3}$, we define
	\begin{equation}
	\label{def:tildevec}
	\begin{aligned}
	\widetilde{\vec} (\mathbf G) = (G_{1,1,1},\ldots, G_{1,1,\widetilde{p}_1}, G_{1,2,\widetilde{p}_1}, \ldots, G_{1,\widetilde{p}_2,\widetilde{p}_1}, \ldots, & \\ G_{\widetilde{p}_1,\widetilde{p}_2,\widetilde{p}_1}, \ldots, G_{\widetilde{p}_1,\widetilde{p}_2,\tilde{p}_1+1}, \ldots, &G_{\widetilde{p}_1,\widetilde{p}_2,p_3},
	\ldots, G_{p_1,p_2,p_3} )\in \mathbb{R}^{\prod_d p_d}.
	\end{aligned}
	\end{equation}
	Let 
	\begin{equation}
	\label{def:tildeZ12}
	\widetilde {\mathbf Z}= ( \mathbf Z_1 ,\mathbf Z_2 ).
	\end{equation}
	Without loss of generality, $\widetilde{\mathbf Z}$ can be written as
	\[
	\widetilde{\mathbf Z}  = (\widetilde{\vec}( \mathbf{X}_1), \ldots, \widetilde{\vec}(\mathbf X_n) )^\tp .
	\]
	Now, suppose $\mathbf G \in \mathcal{S}$, where $\mathcal{S}$ is defined as \eqref{eqn:example:3:defS}. 
	By definition, there exists  $\mathbf b \in \mathbb{R}^{p_1p_2p_3}$ such that 
	\begin{equation}
	\label{tildeA}
	\widetilde{\vec} (\mathbf G) = (\widetilde{\mathbf Z}^\tp \widetilde{\mathbf Z})^+ \widetilde{\mathbf Z}^\tp \mathbf y + (\mathbf I - (\widetilde{\mathbf Z}^\tp \widetilde{\mathbf Z})^+ \widetilde{\mathbf Z}^\tp \widetilde{\mathbf Z}) \mathbf b.
	\end{equation}
	Recall 
	\begin{equation}\label{def:tildeAb}
	\widetilde{\mathbf G}_b = \widetilde{\mathbf v}_1 \circ \widetilde{\mathbf w}_2 \circ \widetilde{\mathbf w}_3 + \widetilde{\mathbf w}_1 \circ \widetilde{\mathbf v}_2 \circ \widetilde{\mathbf w}_3+ \widetilde{\mathbf w}_1 \circ \widetilde{\mathbf w}_2 \circ \widetilde{\mathbf v}_3,
	\end{equation}
	where $\widetilde{\mathbf w}_d,\widetilde{\mathbf v}_d \in \R^{\widetilde{p}_d}$ are pairs of linearly independent vectors,  $d=1,2,3$.
	Using \eqref{def:tildeZ12}, \eqref{tildeA} and \eqref{def:tildeAb}, we have 
	\begin{equation}
	\label{eqn:tildeA2}
	\widetilde{\vec} (\mathbf G)= \begin{pmatrix}  \vec (\widetilde {\mathbf G}_{b})  \\  	(\mathbf Z_2^\tp \mathbf Z_2)^+\mathbf Z_2^\tp\mathbf Z_2  \mathbf b_{p-s} \end{pmatrix}, 
	\end{equation}
	where $\mathbf b_{p-s} \in \mathbb{R}^{p-s}$, $s=\widetilde{p}_1\widetilde{p}_2\widetilde{p}_3$ and $\widetilde {\mathbf G}_{b} \in \mathbb{R}^{\widetilde{p}_1 \times \widetilde{p}_2 \times \widetilde{p}_3}$ is a sub-tensor of $\mathbf G$ with elements $\widetilde {\mathbf G}_{b,\widetilde{i}_1,\widetilde{i}_2,\widetilde{i}_3} =  {\mathbf G}_{\widetilde{i}_1,\widetilde{i}_2,\widetilde{i}_3} $ for $\tilde{i}_d =1,\ldots, \widetilde{p}_d$ . 
	
	Firstly, we will show $\mathrm {rank}(\mathbf G) \ge 3 $. If it does not hold, then $\mathrm {rank}(\mathbf G)\le 2$. 
	We then have the sub-tensor $\widetilde{\mathbf{G}}_b$ satisfying $\mathrm {rank}(\widetilde{\mathbf{G}}_b) \le 2$.
	However, using the definition \ref{def:tildeAb} and Corollary 5.12 of \citet{de2008tensor}, we have 
	\[
	\mathrm{rank}(\widetilde{ \mathbf G}_b) =3,
	\]
	which is a contradiction. 
	Thus, $\mathrm{rank}(\mathbf G) \ge 3$. Secondly, we will show there exists a tensor in the form of \eqref{eqn:tildeA2} has border rank 2. Indeed, it can be obtained by taking  $\mathbf b_{p-s}=\mathbf 0$. 
	In summary, this setting satisfies the condition in Lemma \ref{lem:divergingset}, which completes the proof.
	
\end{proof}

\begin{proof}[Proof of the result of Example \ref{exam:ridge}]
	The optimization  \eqref{eq:opt_ridge} can be reformulated as
	\[
	\min_{\mathrm{rank}(\mathbf A) \le R}   \Vert \mathbf y_{\alpha} - \mathbf Z_{\alpha} \vec(\mathbf A) \Vert_2^2,
	\]
	where $\mathbf y_\alpha =(\mathbf y^\tp , \mathbf 0^\tp )^\tp $  and $\mathbf Z_{\alpha }= (\mathbf Z^\tp  , \sqrt{\alpha} \mathbf I_p)^\tp $. To use Lemma \ref{lem:divergingset}, we firstly define $\widetilde{\vec}(.)^{-1} $, with is the inverse of the vectorization \ref{def:tildevec}. The set of solutions can be written as
	\[
	\mathcal{S} = \bigg[ \bigg\{ \frac{1}{1+\alpha}\widetilde{\vec}^{-1} \begin{pmatrix} \vec (\widetilde{\mathbf G}_b)  \\  \mathbf 0 \end{pmatrix}   \bigg \} \bigg],
	\]
	where $\widetilde{\mathbf G}_b$ is defined as \eqref{def:tildeAb}. 
	It is straightforward to see $\mathcal{S}$ satisfies the condition of Lemma  \ref{lem:divergingset}, which completes the proof. 
\end{proof}

\subsection{Proof of Lemma \ref{lem:divergingset}}
\label{proof_of_lemma1}
\begin{proof}
	Recall that
	\begin{equation}
	\label{eqn:example:3:defS1}
	\mathcal{S}=\{\mathbf A: \vec (\mathbf A) = (\mathbf Z^\tp \mathbf Z)^+ \mathbf Z^\tp \mathbf y + (\mathbf I - (\mathbf Z^\tp \mathbf Z)^+ \mathbf Z^\tp \mathbf Z) \mathbf b , \mathbf b \in \mathbb{R}^{p_1\cdots p_D}\},
	\end{equation}
	\begin{equation}
	\label{eqn:example:3:defRm}
	R_m = \min \{\mathrm{rank}(\mathbf A): \mathbf A \in \mathcal{S} \},
	\end{equation}
	and
	\begin{equation}
	R_b = \min \{\mathrm{rank}_{\rm B}(\mathbf A): \mathbf A \in \mathcal{S} \}.
	\end{equation}
	Suppose $\mathbf A_m \in \mathcal{S}$, $rank(\mathbf A_m) = R_m$. 
	By definition, $R_{b} \le R < R_m $.
	Recall 
	\[
	F(\mathbf A) = \Vert \mathbf y - \mathbf Z \vec ( \mathbf A ) \Vert^2.
	\]
	It is trivial to see $\inf f = \inf F$. Let $\mathbf A_t$ be the corresponding tensor of $\btheta_t$. 
	We will show two facts, i.e., 
	\begin{equation}
	\label{eqn:example:3:fact1}
	F(\mathbf A) > \inf F \quad \text{if}  \quad \mathrm{rank}(\mathbf A) \le R,
	\end{equation}
	and there exists $\{\mathbf A_t \} $ satisfying 
	\begin{equation}
	\label{eqn:example:3}
	\mathrm{rank}(\mathbf A_t) \le R \quad \text{and} \quad F(\mathbf A_t) \to \inf F. 
	\end{equation}
	The two facts, i.e., \eqref{eqn:example:3:fact1} and \eqref{eqn:example:3}, lead that the  optimization  \eqref{eqn:opt_openset} does not have a solution.

	Since $R < R_m$, it follows from \eqref{eqn:example:3:defS1} and \eqref{eqn:example:3:defRm} that \eqref{eqn:example:3:fact1} holds.
	Now, let us prove \eqref{eqn:example:3}. It is trivial to obtain 
	\begin{equation}
	\label{eqn:example:3:2}
	\begin{aligned}
	&\Vert \mathbf y - \mathbf Z \vec ({\mathbf A})  \Vert_2^2 \\
	\le & \inf F + \Vert \mathbf Z \vec ({\mathbf A_m}) - \mathbf Z \vec (\mathbf A) \Vert^2  \\
	\le & \inf F + C \Vert \mathbf A_m -  \mathbf A \Vert^2,
	\end{aligned}
	\end{equation}
	for all $\mathbf A$ satisfying $rank(\mathbf A) \le R$.
	Since $R\ge R_{bm}$, there exists a sequence $\{\mathbf A_t \}$ satisfying 
	\begin{equation}
	\label{eqn:example:3:3}
	\mathrm{rank}(\mathbf A_t) \le R \quad \text{and} \quad \Vert \mathbf A_m - \mathbf A_t \Vert \to 0. 
	\end{equation}
	Using \eqref{eqn:example:3:fact1}, \eqref{eqn:example:3:2} and \eqref{eqn:example:3:3} we finished the proof of  \eqref{eqn:example:3}. Thus, we prove the argument.
\end{proof}

\subsection{Proof of Theorem \ref{thm:diverging}}
\label{proof:thm:diverging}
\begin{proof}
	Without loss of generality, we can fix one representation of CP decomposition. Denote
	\[
	\tilde{\btheta}_t =  \{ \vec(\tilde{\mathbf B}_1^t)^\tp, \cdots, \vec(\tilde{\mathbf B}_D^t)^\tp\},
	\]
	where 
	\[
	\tilde{\mathbf B}_d^t = \bigg (  \frac{\bbeta_{d,1}^t}{\Vert \bbeta_{d,1}^t \Vert}, \ldots, \frac{\bbeta_{d,R}^t}{\Vert \bbeta_{d,R}^t \Vert} \bigg), \text{for}, \ d=1,\ldots, D-1
	\]
	and
	\[
	\tilde{\mathbf B}_D^t = \bigg (  \prod_{d=1}^{D-1} \Vert \bbeta_{d,1}^t \Vert {\bbeta_{D,1}^t}, \ldots,  \prod_{d=1}^{D-1} \Vert \bbeta_{d,R}^t \Vert {\bbeta_{D,R}^t} \bigg).
	\]
	It is straightforward that
	\begin{equation}
	\label{eqn:proof:diffrep}
	f(\btheta_t) = f(\tilde{\btheta}_t) \
	\text{and} \ \mathcal{M}(\btheta_t) = \mathcal{M}(\tilde{\btheta}_t).
	\end{equation} The flowing proof is similar to Lemma 1 of \citet{krijnen2008non}. Assume $\{ \btheta_t \}$ is a sequence such that $f(\btheta_t) \to \inf f$ and $\mathcal{M}(\btheta_t) < \infty$. Using \eqref{eqn:proof:diffrep},  $\{ \tilde{\btheta}_t \}$ is  a bounded sequence. By Bolzano-Weierstrass theorem, 
	there exists a further convergent subsequence $\{\tilde{ \btheta}_{t_{j}} \}$.
	If follows from the continuity of $f$ that 
	\[
	\lim_{j \to \infty } f(\tilde{\btheta}_{t_{j }} ) = f(\hat{\btheta}) = \inf f,
	\]
	where $\hat{\btheta} =  	\lim_{j \to \infty } \tilde{\btheta}_{t_{j }}$ and it attains the infimum of $f$. By assumptions, it leads to a contradiction. Thus, any subsequence of $\{ \tilde{\btheta}_t \}$ is unbounded, which leads to $\mathcal{M}(\btheta_t) = \mathcal{M}(\tilde{\btheta}_t) \to \infty$.
\end{proof}

\subsection{Proof of Theorem \ref{thm:nonasymptotic}}
\label{proof:thm:nonasymptotic}
\begin{proof}
	Recall
	\[
	\mathbf D_t =(\mathbf d_{1,t}, \ldots, \mathbf d_{R,t}),
	\]
	where 
	\[
	\mathbf d_{r,t} = \frac{1}{ \prod_d \Vert  \bbeta_{dr,t} \Vert  } \bbeta_{1r,t}  \otimes \cdots \otimes \bbeta_{Dr,t} \in \mathbb{R}^{\prod p_d}.
	\]
	There exists a vectorization rule $\bar{vec}(\cdot)$ for a tensor, such that 
	\begin{equation}
	\bar{\vec}(\mathbf A_t) = \mathbf D_t \mathbf h_t,
	\end{equation}
	where $\mathbf A_t) $ is the corresponding tensor of $\btheta_t$ and
	\begin{equation}
	\label{def:h}
	\mathbf{h}_t = \bigg(\prod_d \Vert  \bbeta_{d1,t} \Vert, \cdots, \prod_d \Vert  \bbeta_{dR,t} \Vert \bigg)^\tp  \in \mathbb{R}^{R}.
	\end{equation}
	For simplicity, we denote 
	\begin{equation}
	\label{eqn:def:barZ}
	\bar{\mathbf Z} = (\bar{\vec}( \mathbf X_1)^\tp , \ldots, \bar{\vec}(\mathbf X_n)^\tp )^\tp.
	\end{equation}
	Using the specific rearrange rule \eqref{eqn:def:barZ}, 
	\eqref{eqn:def:F} at iteration $t$ can be written as 
	\[
	F(\mathbf A_t) = \Vert \mathbf y - \bar{\mathbf Z} \mathbf D_t \mathbf h_t  \Vert^2.
	\]
	We will see the performance of $\mathbf D_t$ when the infimum does not attain as follow.
	
	Firstly, we note that there exits a constant $t_0$, if $t \ge t_0$, then
	\begin{equation}
	\label{Dthtbounded}
	\Vert \bar{\mathbf Z} \mathbf  D_t \mathbf h_t \Vert = 	\Vert \bar{\mathbf Z} \mathbf  D_t \mathbf h_t - \mathbf y + \mathbf y \Vert \le \Vert \bar{\mathbf Z} \mathbf  D_t \mathbf h_t - \mathbf y\Vert + \Vert \mathbf y \Vert <  1 + \Vert \mathbf y \Vert
	\end{equation}
	
	Secondly, we will show the final result. 	Using Assumption \ref{assump:deterministic_X}, we have
	\[
	\Vert  \mathbf D_t \mathbf h_t \Vert_2^2 \le \frac{C\Vert \mathbf Z \mathbf D_t \mathbf h_t \Vert_2^2}{n}. 
	\]
	By definition, we have
	\[
	\lambda_{\min}(\mathbf D_t^\tp \mathbf D_t) \Vert  \mathbf h_t \Vert_2^2 \le	\Vert  \mathbf D_t \mathbf h_t \Vert_2^2 . 
	\]
	Therefore,
	\[
	\lambda_{\min}(\mathbf D_t^\tp \mathbf D_t) \le \frac{C_1 + C_2 \Vert \mathbf y \Vert^2 }{n\Vert \mathbf h_t \Vert^2 }.
	\]
	The proof about  $\mathbf D_t$ can be finished by noting $\mathcal{M}(\btheta_t) = \Vert  \mathbf h_t \Vert_1 \le \sqrt{R} \Vert  \mathbf h_t \Vert$, where $\Vert . \Vert_1$  is the $l_1$-norm of a vector.
	The proof of $\mathbf B_{d,t}$ can be finished by using the proof of
	Corollary 2 in \citet{krijnen2008non}.

\end{proof}

\subsection{Proof of Theorem \ref{thm:conlinar:asym}}
\label{proof:thm:conlinar:asym}
We first present the argument about restricted eigenvalue.
\begin{lemma}
	\label{lem:eigen}
	Under Assumptions \ref{assump:predictor} and \ref{assump:error}, if $n > C w^2(\mathcal{P})$, we have 
	\[
	C_1 \le \inf_{\mathbf A \in \mathcal{P}} \frac{1}{n} \bigg \vert  \sum_{i=1}^n \langle \mathbf A, \mathbf X_i \rangle \bigg \vert^2 \le \sup_{\mathbf A \in \mathcal{P}} \frac{1}{n} \bigg \vert  \sum_{i=1}^n \langle \mathbf A, \mathbf X_i \rangle \bigg \vert^2 \le C_2,
	\]
	with  probability at least $1 - 2 \exp(-C_3 w^2(\mathcal{P}))$. Further, $w(\mathcal{P})$ in the lemma can be replace to be $V$ with $V \ge w(\mathcal{P})$.
\end{lemma}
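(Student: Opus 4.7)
The plan is to reduce the lemma to a uniform concentration statement for the empirical quadratic form $\frac{1}{n}\sum_{i=1}^n \langle \mathbf A, \mathbf X_i\rangle^2$ around its population counterpart $\vec(\mathbf A)^\tp \Sigma\vec(\mathbf A)$, uniformly in $\mathbf A\in\mathcal P$. Under Assumption \ref{assump:predictor} the population quantity lies in $[S_{\min}, S_{\max}]$ for every $\mathbf A\in\mathcal P$ (since $\Vert \mathbf A\Vert_F=1$), so it suffices to show that with the stated probability
\[
\sup_{\mathbf A\in\mathcal P}\Bigl\vert \tfrac{1}{n}\sum_{i=1}^n \langle \mathbf A, \mathbf X_i\rangle^2 - \vec(\mathbf A)^\tp\Sigma\vec(\mathbf A)\Bigr\vert \le \tfrac{S_{\min}}{2},
\]
which then yields $C_1=S_{\min}/2$ and $C_2=S_{\max}+S_{\min}/2$.

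The uniform deviation is exactly the kind of bound produced by a sub-Gaussian matrix deviation inequality such as Theorem 9.1.1 of Vershynin (2018). Set $\mathbf W=\mathbf Z\Sigma^{-1/2}$, so that $\mathbf W$ has independent rows $\vec(\mathbf X_i)^\tp\Sigma^{-1/2}$ with identity covariance and sub-Gaussian norm at most $\kappa$. Let $T=\Sigma^{1/2}\vec(\mathcal P)$, which lies in the annulus of radii $[\sqrt{S_{\min}},\sqrt{S_{\max}}]$. Then for any $u\ge 0$, with probability at least $1-2\exp(-u^2)$,
\[
\sup_{\mathbf a\in T}\Bigl\vert \tfrac{1}{\sqrt n}\Vert \mathbf W\mathbf a\Vert_2 - \Vert \mathbf a\Vert_2\Bigr\vert\le \frac{C\kappa^2(w(T)+u)}{\sqrt n}.
\]
Using the operator-norm bound $\Vert \Sigma^{1/2}\Vert_{\mathrm{op}}\le\sqrt{S_{\max}}$ gives $w(T)\le\sqrt{S_{\max}}\,w(\mathcal P)$; choosing $u=c\,w(\mathcal P)$ (or $u=c\,V$ for any $V\ge w(\mathcal P)$, which covers the final remark of the lemma) and invoking the sample-size condition $n>Cw^2(\mathcal P)$ with a sufficiently large constant $C$ depending only on $\kappa, S_{\min}, S_{\max}$ forces the right-hand side to be at most $\sqrt{S_{\min}}/4$. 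Squaring the resulting two-sided bound and using $n^{-1}\Vert \mathbf W\mathbf a\Vert_2^2=n^{-1}\sum_i\langle \mathbf A,\mathbf X_i\rangle^2$ together with $\Vert \mathbf a\Vert_2^2=\vec(\mathbf A)^\tp\Sigma\vec(\mathbf A)\in[S_{\min},S_{\max}]$ delivers the target uniform bound. The tail probability has the claimed form $1-2\exp(-C_3 w^2(\mathcal P))$.

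The main technical obstacle is invoking the matrix deviation inequality on the nonconvex low-rank manifold and transferring the Gaussian width through $\Sigma^{1/2}$. Both steps are by now standard once the whitening is in place; the concrete upper bound on $w(\mathcal P)$ itself is supplied separately by Theorem \ref{thm:gaussanwidthsbound} and is not used inside this proof. Note that Assumption \ref{assump:error} plays no role here, since the statement involves only the predictors $\{\mathbf X_i\}$, and the hypothesis is only needed downstream in Theorem \ref{thm:conlinar:asym} and Theorem \ref{thm:tenreg}.
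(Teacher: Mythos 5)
Your proof is correct, and it in fact supplies the details that the paper omits: the paper's entire proof of this lemma is a one-line citation to Theorems 10 and 12 of Banerjee et al.\ (2015), which are restricted-eigenvalue results for sub-Gaussian designs proved by exactly the kind of Gaussian-width-controlled uniform concentration you carry out. Your route through the whitening $\mathbf W = \mathbf Z\Sigma^{-1/2}$ and the matrix deviation inequality (Vershynin, Theorem 9.1.1, in its high-probability form with the extra $u\cdot\mathrm{rad}(T)$ term, which is harmless here since $\mathrm{rad}(T)\le\sqrt{S_{\max}}$) is a clean and essentially equivalent path to the same two-sided bound; the transfer $w(\Sigma^{1/2}\vec(\mathcal P))\le\sqrt{S_{\max}}\,w(\mathcal P)$ via Sudakov--Fernique is standard. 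Two small points in your favor: you correctly read the displayed quantity as $\tfrac1n\sum_i\vert\langle\mathbf A,\mathbf X_i\rangle\vert^2$ (the literal typesetting misplaces the absolute-value bars, but this is how the lemma is used in the proofs of Theorems 2 and 3), and your observation that Assumption 4 on the errors is never used in this lemma is accurate.
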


\begin{proof}[Proof of  Lemma \ref{lem:eigen}] 
	
	Using Theorem 10 and 12 of \citet{banerjee2015estimation} and Assumption \ref{assump:predictor} , we will finish the proof.
\end{proof}
Now we turn to the proof of  Theorem \ref{thm:conlinar:asym}.
\begin{proof}
	Using Lemma \ref{lem:eigen} and the same arguments in the proof of Theorem \ref{thm:nonasymptotic}, we obtain
	\[
	Cn \lambda_{\min}(\mathbf D_t^\tp \mathbf D_t) \Vert \mathbf h_t \Vert^2  \le 	\Vert \mathbf Z \mathbf D_t \mathbf h_t \Vert_2^2 ,
	\]
	with probability at least $1 - 2 \exp\{-C_3 w^2(\mathcal{P})\}$, which implies 
	\[
	\lambda_{\min}(\mathbf D_t^\tp \mathbf D_t) \le \frac{C}{n\Vert \mathbf h_t \Vert^2 },
	\]
	with probability at least $1 - 2 \exp\{-C_3 w^2(\mathcal{P})\}$. We then will finish the proof by following the arguments in the proof of Theorem \ref{thm:nonasymptotic}.
\end{proof}

\subsection{Proof of Theorem \ref{thm:gaussanwidthsbound}}
\label{proof:gaussanwidthsbound}
\begin{proof}
	The proof of Theorem \ref{thm:gaussanwidthsbound} can be finished by using Lemma \ref{lem:gaussian_width} and \ref{lem:covering_number}.
\end{proof}
\begin{lemma}
	\label{lem:gaussian_width}
	We have 
	\[
	w(\mathcal{P}) \le  C \int_0^2 \{ \log N(\xi,\mathcal{P}, l_2)\}^{1/2} \mathrm{d}\xi.
	\]
\end{lemma}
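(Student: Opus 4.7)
The plan is to recognize $w(\mathcal{P})$ as the expected supremum of a canonical Gaussian process on $\mathcal{P}$ and then invoke Dudley's entropy integral bound. Specifically, for each $\mathbf{A} \in \mathcal{P}$ define
\[
G_{\mathbf{A}} = \langle \vec(\mathbf{A}), \mathbf{x} \rangle, \qquad \mathbf{x} \sim \mathcal{N}(\mathbf{0}, \mathbf{I}_{\prod p_d}),
\]
so that $\{G_{\mathbf{A}}\}_{\mathbf{A} \in \mathcal{P}}$ is a centered Gaussian process whose increment variance is
\[
\mathbb{E}\,|G_{\mathbf{A}} - G_{\mathbf{A}'}|^2 = \Vert \vec(\mathbf{A}) - \vec(\mathbf{A}') \Vert^2 = \Vert \mathbf{A} - \mathbf{A}' \Vert_F^2.
\]
Thus $\{G_{\mathbf{A}}\}$ has sub-Gaussian increments with respect to the $l_2$ (Frobenius) metric on $\mathcal{P}$, and $w(\mathcal{P}) = \mathbb{E}\sup_{\mathbf{A}\in \mathcal{P}} G_{\mathbf{A}}$.

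Next, I would apply Dudley's inequality (see, e.g., Theorem 8.1.3 of \citet{vershynin2018high}), which states that for any centered Gaussian (or sub-Gaussian) process $\{Y_t\}_{t\in T}$ with increment metric $d$,
\[
\mathbb{E}\sup_{t\in T} Y_t \le C \int_0^{\mathrm{diam}(T)/2} \sqrt{\log N(\xi, T, d)}\, \mathrm{d}\xi.
\]
Applied to our process with $T = \mathcal{P}$ and $d = l_2$, this yields
\[
w(\mathcal{P}) \le C \int_0^{\mathrm{diam}(\mathcal{P})/2} \sqrt{\log N(\xi, \mathcal{P}, l_2)}\, \mathrm{d}\xi.
\]

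Finally, it remains to check that the upper limit of integration can be taken to be $2$. Since every $\mathbf{A} \in \mathcal{P}$ satisfies $\Vert \mathbf{A} \Vert_F = 1$, the triangle inequality gives $\Vert \mathbf{A} - \mathbf{A}' \Vert_F \le 2$ for all $\mathbf{A}, \mathbf{A}' \in \mathcal{P}$, so $\mathrm{diam}(\mathcal{P}) \le 2$ (and hence the diameter-based upper limit $\mathrm{diam}(\mathcal{P})/2 \le 1 \le 2$; extending the integration range to $2$ only enlarges the right-hand side, since the integrand is nonnegative). This yields the claimed bound.

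There is no real obstacle here beyond a correct invocation of the standard Dudley bound; the only points of care are (i) verifying the sub-Gaussian increment condition (which is immediate because the process is already Gaussian with increment variance exactly the squared $l_2$ distance) and (ii) handling the upper limit of integration, which is controlled by the diameter of $\mathcal{P}$.
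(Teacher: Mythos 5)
Your proof is correct and follows essentially the same route as the paper, which simply cites Dudley's entropy integral bound; you have just filled in the standard details (identifying $w(\mathcal{P})$ as the expected supremum of the canonical Gaussian process with Frobenius-metric increments, and bounding the upper limit of integration by the diameter of the unit-norm set $\mathcal{P}$).
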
	
\begin{proof}[Proof of  Lemma \ref{lem:gaussian_width}] 
	
	This is a direct result of Dudley’s integral entropy bound \citep{koltchinskii2011oracle,vershynin2018high}
\end{proof}
\begin{lemma}
	\label{lem:covering_number}
	Suppose $N(\xi, \mathcal{P}, l_2 )$ is the covering number of $\mathcal{P}$. We then have 
	\begin{equation}
	\label{eqn:convering_number}
	N(\xi, \mathcal{P}, l_2)  \le  \big({C}/{\xi} \big)^{R^D + R\sum_{d=1}^Dp_d}.
	\end{equation}
\end{lemma}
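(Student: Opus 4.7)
The plan is to cover $\mathcal{P}$ via a Tucker-type reparametrization rather than CP. The exponent $R^D + R\sum_d p_d$ matches exactly the number of Tucker parameters (a core of size $R\times\cdots\times R$ plus $D$ factor matrices of size $p_d\times R$), which strongly suggests that Tucker, not CP, is the right compressed representation. Concretely, any $\mathbf{A} \in \mathcal{P}$ has $\rank(\mathbf{A}) \le R$, which implies that its mode-$d$ multilinear rank is at most $R$ for every $d$. Hence the HOSVD produces a decomposition $\mathbf{A} = \mathbf{C} \times_1 \mathbf{U}_1 \times_2 \cdots \times_D \mathbf{U}_D$, where $\mathbf{C} \in \mathbb{R}^{R \times \cdots \times R}$ and each $\mathbf{U}_d \in \mathbb{R}^{p_d \times R}$ has orthonormal columns (padding trivially if some mode has rank strictly less than $R$). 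Since orthonormal factors preserve the Frobenius norm, $\Vert \mathbf{C} \Vert_F = \Vert \mathbf{A} \Vert_F = 1$.

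The second ingredient is a Lipschitz bound for the reconstruction map. Writing $\mathbf{A} - \tilde{\mathbf{A}}$ as a $(D+1)$-term telescoping sum (each term varies a single factor or the core) and applying the standard inequality $\Vert \mathbf{B} \times_d \mathbf{M} \Vert_F \le \Vert \mathbf{M} \Vert_{\mathrm{op}} \Vert \mathbf{B} \Vert_F$, together with $\Vert \mathbf{U}_d \Vert_{\mathrm{op}} = \Vert \tilde{\mathbf{U}}_d \Vert_{\mathrm{op}} = 1$ and $\Vert \mathbf{C} \Vert_F, \Vert \tilde{\mathbf{C}} \Vert_F \le 1$, yields
$$
\Vert \mathbf{A} - \tilde{\mathbf{A}} \Vert_F \;\le\; \Vert \mathbf{C} - \tilde{\mathbf{C}} \Vert_F + \sum_{d=1}^D \Vert \mathbf{U}_d - \tilde{\mathbf{U}}_d \Vert_F.
$$
I then cover each factor separately by standard volumetric bounds: the unit Frobenius sphere in $\mathbb{R}^{R^D}$ admits an $\epsilon$-net of size $(3/\epsilon)^{R^D}$, and the Stiefel manifold $\{\mathbf{U} \in \mathbb{R}^{p_d \times R}: \mathbf{U}^{\tp} \mathbf{U} = \mathbf{I}_R\}$ sits inside the Frobenius ball of radius $\sqrt{R}$ and hence admits an $\epsilon$-net (taken within the manifold so the orthonormality of $\tilde{\mathbf{U}}_d$ is preserved) of size at most $(C_R/\epsilon)^{Rp_d}$. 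Choosing $\epsilon = \xi/(D+1)$ in each net and taking the Cartesian product gives, via the displayed Lipschitz bound, a $\xi$-net of $\mathcal{P}$ of total cardinality at most $(C/\xi)^{R^D + R\sum_d p_d}$, after absorbing polynomial factors in $R$ and $D$ into $C$.

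The main obstacle will be the Lipschitz step. While the telescoping computation is routine, the bookkeeping must be done carefully so that every intermediate ``partially updated'' tensor appearing in the telescoping retains Frobenius norm bounded by $1$, which relies crucially on the orthonormality of both $\mathbf{U}_d$ and $\tilde{\mathbf{U}}_d$; this is the reason for insisting that the Stiefel net lies inside the Stiefel manifold rather than merely inside the ambient Frobenius ball. Once the Lipschitz identity is established, the remaining ingredients (volumetric net for the sphere, volumetric net for the Stiefel manifold, and the product-net construction) are standard and require no tensor-specific argument.
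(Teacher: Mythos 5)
Your proof is correct and takes essentially the same route as the paper: the paper simply observes that $\rank(\mathbf A)\le R$ forces the multilinear (Tucker/HOSVD) ranks to be at most $R$ and then cites Lemma~2 of \citet{rauhut2017low}, whose proof is exactly your argument (orthonormal Tucker factors, a telescoping Lipschitz bound, and a product of volumetric nets for the core and the factor matrices). The only caveat worth noting is that covering the Stiefel factors in Frobenius norm introduces a radius $\sqrt{R}$, so your constant $C$ picks up a mild dependence on $R$ and $D$ (as does the cited lemma's $3(D+1)$ factor), which is consistent with how the paper treats constants.
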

\begin{proof}[Proof of  Lemma \ref{lem:covering_number}] 
	Note that CP decomposition is a special Tucker Decomposition, which will lead a higher-order singular value decomposition (HOSVD)\citep[see, e.g.,][]{de2000multilinear}. By 
	Lemma 2 of \citet{rauhut2017low}, we obtain \eqref{eqn:convering_number}.
\end{proof}

\subsection{Proof of Corollary \ref{lem:f_lambda}}
\label{proof:lem:f_lambda}
\begin{proof}
	Let 
	\[
	f_{\lambda}(\btheta)  =  f(\btheta) + \lambda  g(\btheta).
	\]
	Assume the optimization \eqref{eqn:penalized:CPlevel} does not have a solution, which yields that $f_{\lambda}$  can not attain the infimum. By definition, there exists a sequence $\{\btheta_{\lambda, t} \}_t$ satisfying
	\begin{equation}
	\label{proof:lem:lambda}
	f_\lambda(\btheta_{\lambda, t} ) \to \inf f_{\lambda}
	\end{equation} as $t \to \infty$. 
	
	If $	\mathcal{M}(\btheta_{\lambda,t}) < \infty$, then there exits a constant $S_1 < \infty$ such that $	\mathcal{M}(\btheta_{\lambda,t})  \le S_1$. By 
	Assumption \ref{assump:g} and \eqref{proof:lem:lambda}, 
	we have a bounded sequence $\{\tilde{\btheta}_{\lambda,t}\}$ satisfying $\Vert \tilde{\btheta}_{\lambda,t}\Vert  \le S_2$ and  $f_\lambda(\tilde{\btheta}_{\lambda, t} ) \to \inf f_{\lambda}$. Using the arguments of Theorem \ref{thm:diverging}, the infimum of $f_{\lambda}$ is attainable, which contradicts the assumption. Now, we continue the proof. If
	\[
	\mathcal{M}(\btheta_{\lambda,t}) \to \infty,
	\]
	using Assumption \ref{assump:g} 
	, we then have 
	\begin{equation}
	\label{proof:lem:lambda2}
	f_{\lambda}(\btheta_{\lambda, t}) = f(\btheta_{\lambda,t}) + \lambda g(\btheta_{\lambda,t})  \ge  \lambda g(\btheta_{\lambda,t}) \to \infty.
	\end{equation}
	However,
	\[
	\inf f_{\lambda} \le f_{\lambda}(\mathbf 0) = f(\mathbf 0 ) + \lambda g(\mathbf 0 ) =\Vert \mathbf y \Vert^2 +\lambda g(\mathbf 0 ) < \infty.
	\]
	which contradict to \eqref{proof:lem:lambda} and \eqref{proof:lem:lambda2}. 
	Thus, we have finished the proof.

\end{proof}

\subsection{Proof of Theorem \ref{thm:tenreg}}
\label{proof:thm:tenreg}
Suppose
\[
\mathbf A_m \in \argmin_{\stackrel{\rank(\mathbf A) \le R,}{ g(\btheta_A) \le G}} \Vert \mathbf A  - \mathbf A_0 \Vert_{HS}^2.
\]
By definition 
\[
\sum_{i=1}^n (y_i - \langle \hat{\mathbf A}, \mathbf X_i \rangle )^2 \le  \sum_{i=1}^n (y_i - \langle \mathbf A_m, \mathbf X_i \rangle )^2 + \lambda g(\mathbf \btheta_{A_m} ),
\]
where $ \btheta_{A_m} $ is one CP parameterization of $\mathbf A_m$.
We then have
\[
\Vert \mathbf y - \mathbf Z  \vec(\hat{\mathbf A}) \Vert_2^2 	 \le \Vert \mathbf y - \mathbf Z \mathbf \vec (\mathbf{A}_m) \Vert_2^2  + \lambda {G}.
\]
It follows that
\begin{equation}
\label{eqn:thm:all}
\begin{aligned}
\Vert \mathbf Z({ \vec ( \hat{\mathbf A} ) } - \mathbf \vec (\mathbf{A}_m)) \Vert_2^2 \le  2 \langle  \mathbf{\epsilon}, \mathbf Z \vec ( \hat{\mathbf A} ) - \mathbf Z \mathbf \vec (\mathbf{A}_m) \rangle + & \\
2 \langle \mathbf Z \mathbf r_{0m} , \mathbf Z \vec ( \hat{\mathbf A} ) - & \mathbf Z \mathbf \vec (\mathbf{A}_m) \rangle  +  \lambda G,
\end{aligned}
\end{equation}
where $\mathbf r_{0m} = \vec(\mathbf A_0) - \vec (\mathbf{A}_m)$ and $\mathbf{\epsilon}=(\epsilon_1, \ldots, \epsilon_n)^\tp$. 

Firstly, we find the upper bound of the first term on the right hand side of \eqref{eqn:thm:all}. Using the Dudley’s integral entropy bound \citep{koltchinskii2011oracle,vershynin2018high}, Assumption \ref{assump:error}, and Lemmas \ref{lem:eigen} -- \ref{lem:covering_number}, 
we will obtain
\begin{equation}
\label{eqn:thm:2edterm}
\langle  \mathbf {\epsilon}, \mathbf Z \vec(\hat{\mathbf{A}}) - \mathbf Z \mathbf \vec (\mathbf{A}_m) \rangle  \le C \sqrt{n} 
\Vert \vec(\hat{\mathbf{A}}) -  \vec(\mathbf A_m) \Vert \bigg(R^D + R\sum_{d=1}^D p_d \bigg )^{1/2},	
\end{equation}
with probability at least 
\[
1 - C_1 \exp \bigg\{-C_2 \bigg(R^D + R\sum_{d=1}^D p_d \bigg) \bigg \}.
\]
Secondly, we obtain the upper bound of the second term on the right hand side of \eqref{eqn:thm:all}. Using Cauchy-Schwarz inequality yields
\begin{equation}
\label{eqn:thm:1stterm}
\langle \mathbf Z \mathbf r_{0m} , \mathbf Z \vec(\hat{\mathbf{A}})- \mathbf Z \mathbf \vec (\mathbf{A}_m) \rangle \le \Vert \mathbf Z \mathbf r_{0m} \Vert \Vert  \mathbf Z \vec(\hat{\mathbf{A}})- \mathbf Z \mathbf \vec (\mathbf{A}_m) \Vert.
\end{equation}
We will assume $n \ge C (R^D + R\sum_{d=1}^D p_d )$ in the following proof.
By Assumptions, $\vec( \mathbf{X}_i)$ is a sub-Gaussian random vector and 
\[
\Vert \vec( \mathbf{X}_i) \Vert_{\psi_2} \le C.
\]
Suppose $\Vert \mathbf{r}_{0m} \Vert \ne 0$,  we then have
\[
\Vert \vec( \mathbf{X}_i)^\tp \mathbf{r}_{0m} \Vert_{\psi_2}/\Vert \mathbf r_{0m} \Vert \le C. 
\]
By Lemma 2.7.6 of \cite{vershynin2018high}, $\{\vec( \mathbf{X}_i)^\tp \mathbf{r}_{0m}\}^2/\Vert \mathbf{r}_{0m} \Vert^2$ is sub-exponential and 
\[
\Vert \{\vec( \mathbf{X}_i)^\tp \mathbf{r}_{0m}\}^2/\Vert \mathbf{r}_{0m} \Vert^2 \Vert_{\psi_1} \le C,
\]
where $\Vert \cdot \Vert_{\psi_1}$ is the sub-exponential norm \citep{vershynin2018high}.  Using Proposition 2.7.1  of \cite{vershynin2018high}, we have 
\[
\bigg\Vert  \frac{1}{n}\Vert \mathbf Z \mathbf r_{0m} \Vert_2^2/ \Vert \mathbf{r}_{0m} \Vert^2 \bigg\Vert_{\psi_1} \le C,
\]
which leads to
\[
\frac{1}{n}\Vert \mathbf Z \mathbf r_{0m} \Vert_2^2/ \Vert \mathbf{r}_{0m} \Vert^2  \le v,
\]
with probability at least $1-2\exp(-Cv)$ for $v > 0$.  Thus 
\begin{equation}
\label{proof:thm:Zr2}
\frac{1}{n}\Vert \mathbf Z \mathbf r_{0m} \Vert_2^2   \le v \Vert \mathbf{r}_{0m} \Vert^2,
\end{equation}
with probability at least $1-2\exp(-Cv)$ for all $v \ge 0$. 
Using Lemmas  \ref{lem:eigen} -- \ref{lem:covering_number}, we have 
\begin{equation}
\label{eqn:thm:upperandlower}
C_1 \sqrt{n}  \Vert  \vec(\hat{\mathbf{A}}) -  \mathbf \vec (\mathbf{A}_m) \Vert_2 \le \Vert  \mathbf Z \vec(\hat{\mathbf{A}})- \mathbf Z \mathbf \vec (\mathbf{A}_m) \Vert_2  \le C_2\sqrt{n }  \Vert  \vec(\hat{\mathbf{A}}) -  \mathbf \vec (\mathbf{A}_m) \Vert_2,
\end{equation}
with probability at least $1-C_1\exp \{-C_2(R^D + R \sum_{d=1}^Dp_d) \}$. 
Applying \eqref{eqn:thm:2edterm}, \eqref{eqn:thm:1stterm},
\eqref{proof:thm:Zr2} and \eqref{eqn:thm:upperandlower} to \eqref{eqn:thm:all}, we have
\[
\begin{aligned}
\Vert \vec(\hat{\mathbf{A}}) -\mathbf \vec (\mathbf{A}_m) \Vert_2^2 \le C \delta \sqrt{ v}   \Vert  \vec(\hat{\mathbf{A}}) -  \mathbf \vec (\mathbf{A}_m) \Vert_2 + & \\  C  \Vert  \vec(\hat{\mathbf{A}}) - & \mathbf \vec (\mathbf{A}_m) \Vert_2  \sqrt{\frac{R^D + \sum_{d=1}^D Rp_d}{n}} + \lambda G,
\end{aligned}
\]
with probability at least 
\begin{equation}
\label{eqn:thm:finalprob}
1 - C_1 \exp \bigg\{-C_2 \bigg(R^D + R\sum_{d=1}^D p_d \bigg) \bigg\}  - C_3\exp \{ -C_4v\}.
\end{equation}
Solving the inequality yields, 
\[
\Vert \vec(\hat{\mathbf{A}}) -\vec(\mathbf A_m) \Vert_2^2 \le C_1 \frac{R^D + R\sum_{d=1}^D p_d }{n} + C_2 v \delta^2 + \lambda C_3 \frac{ G}{n},
\]
which implies
\[
\Vert \vec(\hat{\mathbf{A}}) - \vec(\mathbf A_0) \Vert_2^2 \le C_1 \frac{R^D + R\sum_{d=1}^D p_d }{n} + (C_2v +C)\delta^2+ \lambda C_3 \frac{G}{n},
\]
with probability at least \eqref{eqn:thm:finalprob}. We then finish the proof.

\subsection{Proof of Corollary \ref{cor:hat_G}}
\label{proof:cor:hat_G}
By definition, $G$ in \eqref{eqn:thm:all} can be replaced by $G - \hat{G}$. Using the same arguments, have 
\[
\Vert \vec(\hat{\mathbf{A}}) - \vec(\mathbf A_0) \Vert_2^2 \le C_1 \frac{R^D + R\sum_{d=1}^D p_d }{n} + (C_2v + C) \delta^2+ \lambda C_3 \frac{G-\hat{G}}{n},
\]
with probability at least \eqref{eqn:thm:finalprob}.
Note that $\Vert \vec(\hat{\mathbf{A}}) - \vec(\mathbf A_0) \Vert_2^2 \ge 0$. We then have
\[
\lambda C \hat{G} \le R^D + R\sum_{d=1}^D p_d  + (C_2v+C) n \delta^2 + \lambda C_3 G,
\]
with probability at least \eqref{eqn:thm:finalprob}, which completes the proof by taking $$\lambda =  \max\{(R^D + R\sum_{d=1}^D p_d)/G, n \delta^2/G\}.$$

\bibliographystyle{rss} 
\bibliography{reference}

\end{document}